\documentclass{article}
\usepackage[utf8]{inputenc}

\usepackage[preprint]{neurips_2026}

\usepackage[T1]{fontenc}    
\usepackage{hyperref}       
\usepackage{url}            
\usepackage{booktabs}       
\usepackage{amsfonts}       
\usepackage{nicefrac}       
\usepackage{microtype}      
\usepackage{xcolor}         
\usepackage{graphicx}
\usepackage{subcaption}
\usepackage{amsmath}
\usepackage{amssymb}
\usepackage{mathtools}
\usepackage{amsthm}
\usepackage{pifont}
\newcommand{\cmark}{\ding{51}}%
\newcommand{\xmark}{\ding{55}}%
\theoremstyle{plain}
\newtheorem{theorem}{Theorem}[section]

\theoremstyle{definition}

\newtheorem{assumption}[theorem]{Assumption}
\theoremstyle{remark}

\usepackage[disable]{todonotes}
\usepackage{algorithm}
\usepackage{algorithmic}
\usepackage{wrapfig}  
\title{TabQL: In-Context Q-Learning with Tabular Foundation Models}

\author{%
  Qisai Liu\\
  Department of Mechanical Engineering\\
  Iowa State University\\
  Ames, IA 50011 \\
  \And
  Zhanhong Jiang \\
  Translational AI Center\\
  Iowa State University\\
  Ames, IA 50011 \\
  \AND
  Timilehin Ayanlade \\
  Department of Mechanical Engineering\\
  Iowa State University\\
  Ames, IA 50011 \\
  \And
  Ashutosh Kumar Nirala \\
  Department of Computer Science\\
  Iowa State University\\
  Ames, IA 50011 \\
  \And
  Yang Li \\
  Independent\\
  \And
  Aditya Balu\\
  Translational AI Center\\
  Iowa State University\\
  Ames, IA 50011 \\
  \And
  Soumik Sarkar\\
  Department of Mechanical Engineering\\
  Iowa State University\\
  Ames, IA 50011 \\
}

\begin{document}

\maketitle

\begin{abstract}
We propose Tabular Q-Learning (TabQL), a reinforcement learning framework that replaces the conventional parametric Q-network in Deep Q-Learning (DQN) with a tabular foundation model endowed with in-context learning capabilities. The key idea is to represent Q-values through a sequence-to-sequence foundation model operating over a tabularized representation of state–action–Q-value tuples, enabling rapid adaptation from limited online interaction by conditioning on recent experience. TabQL departs from classical DQN by leveraging (i) zero- or few-shot Q-value inference via in-context updates, and (ii) a warm-up phase using standard DQN to bootstrap high-quality context. Particularly, to enhance the context quality, new transitions are generated by executing actions output by TabQL with predicted Q values from DQN.
We formalize TabQL, analyze its convergence and sample complexity under mild assumptions, and show that TabQL interpolates between vanilla Q-learning and DQN with in-context learning. Our analysis demonstrates that TabQL achieves improved efficiency compared to DQN by amortizing Bellman updates through in-context learning. Extensive numerical experiments with several benchmarks showcase the effectiveness and efficacy of the proposed TabQL.
\end{abstract}

\section{Introduction}\label{intro}
\vspace{-0.1in}
Reinforcement learning (RL)~\cite{kaelbling1996reinforcement,sutton1998introduction} seeks to learn optimal decision-making strategies through interaction with an environment, with Q-learning~\cite{watkins1992q} and its deep variant, Deep Q-Networks (DQN)~\cite{fan2020theoretical,mnih2013playing}, forming a central methodological backbone. By approximating the optimal action-value function using deep neural networks, DQN has enabled RL to scale to high-dimensional state spaces and achieve remarkable empirical successes~\cite{tang2025deep}, particularly in mobile robots~\cite{erkan2022mobile,yu2023hybrid,yang2020multi}, resource allocations~\cite{wu2021hybrid}, energy management systems~\cite{qin2024comparative,xiao2023ship}, manufacturing systems~\cite{he2022multi,zhang2022distributed}, and IoT services~\cite{bansal2022urbanenqosplace,liang2021toward}. Nevertheless, these gains come with well-recognized costs: DQN requires massive amounts of online interaction, is sensitive to distribution shift, and often generalizes poorly beyond the training regime.



At the same time, recent advances in foundation models~\cite{awais2025foundation,yuan2023power}, i.e., large-scale sequence models trained on broad and diverse data, have revealed a striking new capability: in-context learning (ICL)~\cite{dong2024survey,wies2023learnability}. Rather than adapting through explicit parameter updates, such models can infer task structure directly from a small number of examples provided as context and show significant adaptation/generalization capabilities~\cite{zhou2026small,yang2024context,zhang2023and}. This capability has reshaped perspectives in supervised and few-shot learning, and it naturally raises a fundamental question for RL: \textit{can action-value learning be carried out through in-context inference rather than repeated gradient-based Bellman updates?}
This work is motivated by the observation that Q-learning can be viewed not only as an optimization problem, but also as a conditional prediction problem over experience. Given a collection of past state–action–Q-value tuples, the task of estimating a Q-value for a new state–action pair can be framed as \textit{predicting the outcome of a Bellman-style computation conditioned on that experience}. A sufficiently expressive tabular foundation model (TFM), e.g. TabPFN~\cite{hollmann2022tabpfn}, TabICL~\cite{qu2025tabicl} and TabDPT~\cite{ma2024tabdpt}, can therefore operate directly on a tabularized representation of experience and perform implicit Bellman reasoning within its context window. This perspective suggests a new learning paradigm in which adaptation occurs primarily through context construction, rather than parameter modification.

\begin{wrapfigure}{r}{0.5\linewidth}
    \centering
    \includegraphics[width=\linewidth]{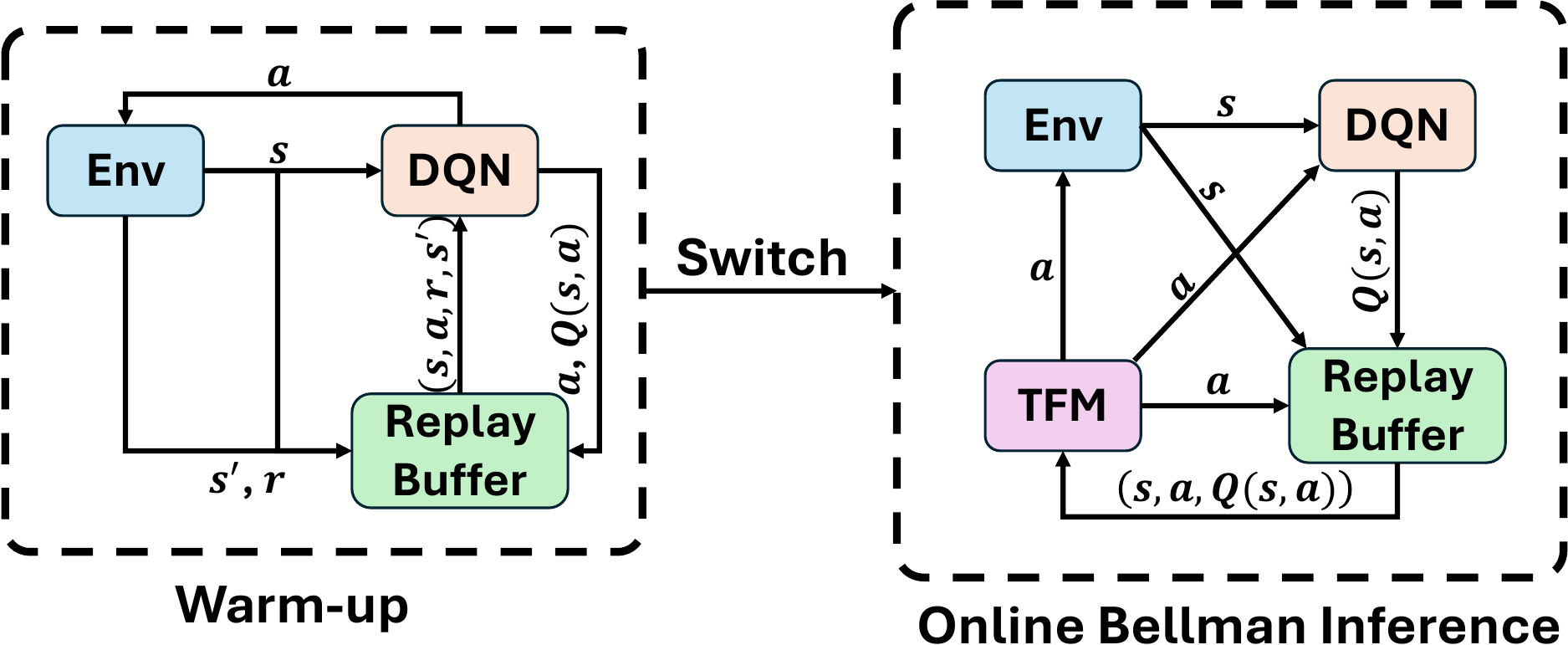}
    \caption{TabQL: A warm-up phase initializes an informative context for Q-value inference. During online Bellman inference, actions selected by the TFM via in-context learning are executed in the environment, while corresponding Q-values are predicted using the DQN trained during warm-up. The resulting transitions are incorporated into the context, enabling continual refinement of in-context Q-value inference.}
    \label{fig:tabql}
    \vspace{-0.1in}
\end{wrapfigure}

However, existing RL methods and theory are not designed around this paradigm. Classical tabular Q-learning~\cite{applebaum2022bridging} enjoys strong convergence guarantees but is infeasible in large state spaces. DQN scales effectively but relies on slow and interaction-heavy stochastic optimization and provides limited theoretical insight into generalization. Recent in-context and sequence model-based RL approaches, such as decision-transformer-style methods~\cite{chen2021decision,zheng2022online,wu2023elastic}, focus predominantly on offline or imitation-based settings and do not offer a principled integration with online Q-learning or Bellman optimality. As a result, there is a gap between the expressive power of foundation models and the algorithmic and theoretical structure of Q-learning.

\textbf{Contributions.} In this work, we bridge this gap by introducing \textit{Tabular foundational Q-Learning} (TabQL). TabQL replaces the conventional parametric Q-network in DQN with a tabular foundation model that estimates Q-values via in-context learning. To ensure stability and efficiency, TabQL incorporates a short DQN warm-up phase to populate informative experience, followed by online action selection through contextual Q-value inference. In particular, to enhance the context quality during online learning, new transitions are generated by executing actions output by TabQL, with predicted Q values from DQN, and are stored back in a replay buffer.
This design allows TabQL to interpolate between tabular Q-learning, deep Q-learning, and in-context reinforcement learning within a unified framework.

Our contributions are fourfold. 
First, we propose a new conceptual framework that reframes Q-learning as in-context inference using tabular foundation models.
Second, we introduce a practical algorithm for Foundational Q-Learning that combines DQN warm-up and contextual Q estimation. The warm-up phase aims to provide an informative context for tabular foundation models for Q-value inference. Additionally, during online learning, new transitions are generated by executing actions from TabQL with predicted Q values from DQN, therefore improving context quality.
Third, we provide a rigorous theoretical analysis establishing convergence and sample complexity guarantees, showing how in-context learning amortizes Bellman updates and reduces effective online interaction.
Finally, we validate the effectiveness and efficacy of the proposed TabQL with several benchmarks, empirically showing the enhanced sample complexity and robust generalization.

\section{Related Work}
\vspace{-0.1in}
\textbf{(Deep) Q-learning.} Classical tabular Q-learning enjoys strong theoretical guarantees, converging almost surely to the optimal Q-function under standard assumptions~\cite{watkins1992q}. However, its reliance on explicit state–action tables limits scalability. To address this, a rich literature studies approximate Q-learning with linear or nonlinear function approximation~\cite{bertsekas1996temporal}. While these methods scale better, they introduce stability and convergence challenges, and their theoretical guarantees typically require restrictive assumptions. Our work revisits tabular representations, not as explicit tables, but as inputs to expressive sequence models that can generalize across tasks.
DQN combines Q-learning with deep neural networks to approximate action-value functions in large state spaces. The original DQN algorithm introduced experience replay and target networks to stabilize training and demonstrated human-level performance on Atari benchmarks~\cite{mnih2013playing}. Subsequent extensions addressed specific failure modes, including overestimation bias (Double DQN)~\cite{van2016deep}, value-function decomposition (Dueling DQN)~\cite{wang2016dueling}, and distributional value estimation~\cite{bellemare2017distributional}. Despite these advances, DQN-style methods remain highly sample-inefficient and rely on repeated gradient-based Bellman updates, motivating alternative paradigms for more efficient learning. 

\textbf{Transformers with in-context learning.} In-context learning (ICL) refers to the ability of large sequence models to adapt to new tasks by conditioning on a small number of examples without parameter updates. This phenomenon was popularized by large language models~\cite{brown2020language} and has since been studied theoretically as implicit Bayesian inference or meta-learning~\cite{garg2022can,von2023transformers}. Recent work has shown that Transformers trained on synthetic tasks can implement learning algorithms in-context, including linear regression and bandit strategies~\cite{li2023transformers}. These findings suggest that foundation models can amortize learning procedures through context, providing the conceptual basis for TabQL.
Sequence-model-based approaches have recently been applied to RL by framing trajectories as sequences and using autoregressive prediction for control. Notable examples include Decision Transformers~\cite{chen2021decision} and trajectory Transformers~\cite{janner2021offline}. While these methods demonstrate strong performance in offline and imitation-based settings~\cite{xu2023hyper}, they do not explicitly model Bellman optimality or Q-functions. Moreover, they typically do not address online exploration and convergence guarantees yet. In contrast, our approach preserves the structure of Q-learning while leveraging sequence models for in-context value estimation. Please see Appendix~\ref{additional_related_work} for more related works.

\section{Problem Formulation and Preliminaries}\label{sec:problem_formulation}
\vspace{-0.1in}

In this section, we formalize the reinforcement learning setting, introduce all necessary notation, and define the objects used throughout the paper. We place particular emphasis on making explicit the connection between classical Q-learning and its realization via tabular foundation models.

\textbf{Markov Decision Process.}
We consider a discounted Markov Decision Process (MDP) $\mathcal{M}=(\mathcal{S},\mathcal{A},P, r,\gamma)$, where $\mathcal{S}$ is a state space, $\mathcal{A}$ is a finite action space, $P(\cdot|s,a)$ is the transition kernel, $r:\mathcal{S}\times \mathcal{A}\to[0,1]$ is the reward function, and $\gamma\in(0,1)$ is the discount factor. A (possibly stochastic) policy $\pi:\mathcal{S}\to\Delta(\mathcal{A})$ induces a distribution over trajectories $(s_0,a_0,r_0,s_1,...)$, with $\Delta(\cdot)$ being a probability function. We define the action-value function of a policy $\pi$ as
$
    Q^\pi(s,a):=\mathbb{E}_\pi[\sum_{t=0}^\infty\gamma^tr(s_t,a_t)|s_0=s,a_0=a]
$ and the optimal action-value function
$
    Q^*(s,a):=\text{sup}_\pi Q^\pi(s,a).
$

\textbf{Notation for Q-value estimates.} Throughout the paper, we distinguish three Q-value quantities: (i) $Q^*$ denotes the true optimal action-value function defined above; (ii) $\hat{Q}^{\mathrm{DQN}}_t$ denotes the action-value estimate produced by the warm-up DQN at step $t$; (iii) $\hat{Q}^{\mathrm{TFM}}_t$ denotes the estimate produced by the tabular foundation model via in-context inference, $\hat{Q}^{\mathrm{TFM}}_t(s,a) := f_\phi(\mathcal{C}_t, s, a)$.
Define the Bellman optimality operator $\mathcal{T}:\mathbb{R}^{|\mathcal{S}|\times |\mathcal{A}|}\to\mathbb{R}^{|\mathcal{S}|\times |\mathcal{A}|}$
\begin{equation}
    (\mathcal{T}Q)(s,a):=r(s,a)+\gamma\mathbb{E}_{s'\sim P(\cdot|s,a)}\bigg[\text{max}_{a'\in\mathcal{A}}Q(s',a')\bigg].
\end{equation}
It is well-known that $\mathcal{T}$ is a $\gamma$-contraction~\cite{kadurha2025bellman} under the sup norm $\|\cdot\|_\infty$, and $Q^*$ is the unique fixed point of $\mathcal{T}$. In tabular Q-learning, the action-value estimates are updated via stochastic approximation:
\begin{equation}\label{eq_4}
    Q_{t+1}(s_t,a_t)=(1-\alpha_t)Q_t(s_t,a_t)+\alpha_t(r_t+\gamma\text{max}_{a'}Q_t(s_{t+1},a')),
\end{equation}
where $\{\alpha_t\}$ is a learning-rate sequence. Deep Q-learning (DQN) replaces the table with a parametric function $Q_\theta:\mathcal{S}\times \mathcal{A}\to\mathbb R$, trained by minimizing the empirical Bellman error over a replay buffer using stochastic gradient descent. To leverage tabular foundation model for Q value inference, we define the \textit{tabularization of experience}. We represent each transition as a tuple $x_i=(s_i,a_i,r_i,s'_i,\hat{Q}^{\mathrm{DQN}}(s_i,a_i))\in\mathcal{X}$, where the final coordinate is the warm-up DQN's Q-estimate used as a soft label, $\mathcal{X}:=\mathcal{S}\times\mathcal{A}\times\mathbb R\times \mathcal{S}\times\mathbb R$. This is the augmentation of the transition in vanilla DQN, which is represented by $(s,a,r,s')$. Q-value inference is the primary focus when leveraging the TFM. Each tuple supplies features $(s,a)$ and the corresponding label $\hat{Q}^{\mathrm{DQN}}(s,a)$ as contextual information, while $r$ and $s'$ are retained for defining the empirical Bellman operator used in our analysis. A dataset or replay buffer is denoted by $\mathcal{D}=\{x_i\}_i^N$. At time $t$, we define a context window of fixed length $K$ as $\mathcal{C}_t:=(x_{t-K}, x_{t-K+1},...,x_{t-1})\in\mathcal{X}^K$. This context constitutes the sole mechanism by which the model adapts during online interaction. 

\textbf{Tabular Foundation Model.} A TFM is a frozen pretrained sequence model $f_\phi: \mathcal{X}^K \times (\mathcal{S}\times\mathcal{A}) \to \mathbb{R}$ that maps a context $\mathcal{C}$ and query $(s,a)$ to a scalar prediction. The TFM's Q-estimate at step $t$ is $\hat{Q}^{\mathrm{TFM}}_t(s,a) := f_\phi(\mathcal{C}_t, s, a)$, and TabQL acts greedily as $\pi_{\phi,\mathcal{C}_t}(s) \in \arg\max_a \hat{Q}^{\mathrm{TFM}}_t(s,a)$. The notation $\hat{Q}^{\mathrm{TFM}}_t$ denotes the TFM's Q-estimate at step $t$, computed via in-context inference on the context $\mathcal{C}_t$. Pretraining and adaptation properties of the TFM are detailed in Appendix~\ref{app:tfm_full}.




\section{TabQL}\label{sec:tabql}
\vspace{-0.1in}
In this section, we introduce Tabular foundational Q-Learning (TabQL), a reinforcement learning framework that replaces the parametric Q-network in DQN with a Tabular Foundation Model (TFM) capable of performing in-context Bellman inference. We first establish the conceptual foundations underlying TabQL, then describe the algorithm in full detail, followed by technical discussion. 

\textbf{Memory decomposition.} TabQL maintains two complementary representations of accumulated experience. The warm-up DQN encodes \emph{global} value information across the entire state-action space in its parameters $\theta$, providing a coarse but persistent prior that does not depend on the current trajectory. The rolling context $\mathcal{C}_t$ encodes \emph{local, recent} value information about the region of the state-action space currently being visited, sufficient for the TFM to perform fine-grained in-context inference. The DQN's role is to supply approximate Q-labels and to maintain coverage over $(s,a)$ pairs that are not present in the recent context; the TFM's role is to consume those labels and produce refined Q-estimates conditioned on local trajectory structure. This decomposition explains both TabQL's strengths and its limits: when context revisitation is high, the TFM dominates and improves on the DQN; when context revisitation is low (e.g., very large state spaces with rare returns to the same $(s,a)$), the TFM falls back to interpolating from the DQN's labels and TabQL degrades gracefully toward DQN's performance rather than failing catastrophically.

\textbf{In-context Bellman inference.} TabQL treats Q-learning as a conditional inference problem: given a context $\mathcal{C}_t = \{(s_i, a_i, r_i, s'_i, \hat{Q}^{\mathrm{DQN}}(s_i, a_i))\}_{i=1}^K \subset \mathcal{B}$, infer the Q-value of a query pair $(s,a)$. We define the empirical Bellman operator induced by $\mathcal{C}_t$:
\begin{equation}\label{eq:empirical_bellman}
(\hat{\mathcal{T}}_{\mathcal{C}_t}\hat{Q})(s,a):=\mathbb{E}_{s'\sim \hat{P}_{\mathcal{C}_t}(\cdot|s,a)}\bigg[r(s,a)+\gamma f_\phi(\mathcal{C}_t,s',a^\star_{\hat{Q}}(s'))\bigg],
\end{equation}
where $a^\star_{\hat{Q}}(s'):=\arg\max_{a'\in\mathcal{A}}\hat{Q}(s',a')$ and $\hat{P}_{\mathcal{C}_t}(\cdot|s,a)$ is the empirical next-state distribution from transitions in $\mathcal{C}_t$ that share $(s,a)$ (or, when no such transitions exist, the nearest-neighbor set under the TFM's input metric — see Assumption~\ref{assumption_3}). The TFM $f_\phi$ approximates the fixed point of $\hat{\mathcal{T}}_{\mathcal{C}_t}$, amortizing Bellman updates through in-context learning rather than gradient descent. Crucially, no task-specific pretraining is required: TabPFN and TabDPT are pretrained on synthetic tabular regression and apply their general-purpose smoothness prior to the noisy DQN labels in $\mathcal{C}_t$, yielding lower-variance Q-estimates than the raw labels themselves. We elaborate on the conceptual setup, the connection to classical Q-learning, and the intuition for why this works in Appendix~\ref{app:icl_bellman_full}.

\begin{algorithm}[tb]
  \caption{Tabular foundational Q-Learning (TabQL)}
  \label{alg:fql}
  \begin{algorithmic}[1]
    \STATE {\bfseries Input:} discount factor $\gamma$, warm-up length $T_0$, buffer size $W$, context size $K$, exploration schedule $\varepsilon_t$, pretrained TFM $f_\phi$
    \STATE Initialize replay buffer $\mathcal{B} \leftarrow \emptyset$, DQN parameters $\theta$
    \STATE \texttt{\#\#\# Warm-up Phase \#\#\#}
    \FOR{$t = 1, 2, \ldots, T_0$}
      \STATE Select $a_t$ via $\varepsilon_t$-greedy on $\hat{Q}^{\mathrm{DQN}}_t(s_t, \cdot)$, execute, observe $(r_t, s_{t+1})$
      \STATE Append $(s_t, a_t, r_t, s_{t+1}, \hat{Q}^{\mathrm{DQN}}_t(s_t, a_t))$ to $\mathcal{B}$ (evicting oldest if $|\mathcal{B}| > W$)
      \STATE Update DQN parameters $\theta$ via standard TD-loss minimization
    \ENDFOR
    \STATE \texttt{\#\#\# In-Context Bellman Inference Phase \#\#\#}
    \FOR{$t = T_0 + 1, T_0 + 2, \ldots$}
      \STATE Sample context $\mathcal{C}_t \subseteq \mathcal{B}$ of size $K$ (most recent transitions)
      \STATE Compute $\hat{Q}^{\mathrm{TFM}}_t(s_t, a) = f_\phi(\mathcal{C}_t, s_t, a)$ for each $a \in \mathcal{A}$
      \STATE Select $a_t$ via $\varepsilon_t$-greedy on $\hat{Q}^{\mathrm{TFM}}_t(s_t, \cdot)$, execute, observe $(r_t, s_{t+1})$
      \STATE Append $(s_t, a_t, r_t, s_{t+1}, \hat{Q}^{\mathrm{DQN}}_t(s_t, a_t))$ to $\mathcal{B}$ (evicting oldest)
      \STATE Update DQN parameters $\theta$ via TD-loss on a mini-batch from $\mathcal{B}$
      \STATE $s_t \leftarrow s_{t+1}$
    \ENDFOR
    \STATE \texttt{\#\#\# Implementation note \#\#\#} A quality-check gate is applied to the context $\mathcal{C}_t$ in both phases (before and after switching), filtering out transitions whose labels fail the quality criterion before they are passed to the TFM.
  \end{algorithmic}
\end{algorithm}

\textbf{Algorithmic framework.}
Algorithm~\ref{alg:fql} depicts the working mechanism of TabQL. We now analyze the algorithm in detail. The warm-up phase (Lines 2–4) ensures that early contexts are informative by relying on a standard DQN policy. 
We denote by $\hat{Q}(s,a)$ the Q-value predicted by the pretrained DQN. Additionally, in practical implementation, we augment the transition with the time $t$, which provides more information through the time feature.
The choice of when to switch from DQN to the tabular foundation model is currently treated as a tunable hyperparameter. This design is intentional rather than ad hoc: the warm-up phase serves only to ensure that the context contains non-degenerate estimates of value and transition structure, after which in-context Bellman inference becomes effective. In practice, the required warm-up horizon is modest, as TabQL does not rely on precise Q-value convergence but only on coarse signal to bootstrap contextual reasoning. Developing adaptive or statistically grounded switching criteria, e.g., based on Bellman residuals or context coverage, is a promising direction for future work and does not affect the validity of our convergence or sample complexity results. 
In context construction (Line 6), we sample from the most recent transitions instead of uniformly sampling from the replay buffer, as recent samples have been empirically observed to yield superior performance. The context size $|\mathcal{C}_t|$ may also be varied over time to improve the adaptivity of in-context learning. We conjecture that smaller contexts are sufficient as $t$ increases, since the model progressively consolidates knowledge that generalizes well. While a principled treatment of adaptive context sizing is of independent interest, we defer it to future work and instead empirically study the effect of different context sizes in our ablation experiments.

From a technical standpoint, TabQL replaces incremental stochastic approximation with amortized Bellman inference (Line 8). Each new transition immediately affects Q-value predictions through inclusion in $\mathcal{C}$, without waiting for gradient descent. This reduces update variance, since inference is deterministic given context (Appendix~\ref{stochasticity}), avoiding noisy SGD updates as evidenced empirically. The quality gate on $\mathcal{C}_t$ is inspired by~\cite{schiff2025gradient}.

Intuitively, the TFM acts as a \textit{learned Q-learning} algorithm. The context $\mathcal{C}$ plays the role of a temporary Q-table, and the model’s forward pass performs the computation that classical Q-learning would require numerous updates to achieve. During environment interaction (Line 10), actions are selected using the TFM rather than the DQN policy. To maintain informative contexts during online interaction, we continue to use DQN-predicted Q-values as supervision. (see Appendix~\ref{dependence}) 

\section{Theoretical Analysis}\label{sec:theory}
\vspace{-0.1in}

In this section, we state the theoretical analysis for TabQL, with proofs deferred to Appendix~\ref{missing_proof}. We recall the corrected empirical Bellman operator from Section~\ref{sec:tabql} (Eq.~\ref{eq:empirical_bellman}):
\begin{equation*}
(\hat{\mathcal{T}}_{\mathcal{C}_t}\hat{Q})(s,a):=\mathbb{E}_{s'\sim \hat{P}_{\mathcal{C}_t}(\cdot|s,a)}\bigg[r(s,a)+\gamma f_\phi(\mathcal{C}_t,s',a^\star_{\hat{Q}}(s'))\bigg].
\end{equation*}
For a fixed context $\mathcal{C}_t$, this operator is a deterministic $\gamma$-contraction. The TFM produces the next Q-iterate via in-context inference:
$
    \hat{Q}^{\mathrm{TFM}}_{t+1}(s,a) = f_\phi(\mathcal{C}_t, s, a).
$
We assume the following ICL approximation condition, which replaces the role of SGD noise in DQN. 

\begin{assumption}\label{assumption_1}
There exists $\varepsilon_{\mathrm{ICL}} > 0$ such that for all $t$,
$\sup_{s,a}\big|\hat{Q}^{\mathrm{TFM}}_{t+1}(s,a) - (\hat{\mathcal{T}}_{\mathcal{C}_t}\hat{Q}^{\mathrm{TFM}}_t)(s,a)\big| \leq \varepsilon_{\mathrm{ICL}}$.
\end{assumption}
This assumption captures model capacity limits, finite context length, and pretraining mismatch. Crucially, this error does not accumulate through SGD. We now decompose the total error between $Q_{t+1}$ and $Q^*$ into multiple sources:


\begin{equation}\label{eq_15}
\hat{Q}^{\mathrm{TFM}}_{t+1}-Q^* = \underbrace{(\mathcal{T}\hat{Q}^{\mathrm{TFM}}_t-\mathcal{T}Q^*)}_{\text{Bellman contraction}} + \underbrace{(\hat{\mathcal{T}}_{\mathcal{C}_t}\hat{Q}^{\mathrm{TFM}}_t-\mathcal{T}\hat{Q}^{\mathrm{TFM}}_t)}_{\text{statistical error (incl.}\,\varepsilon_{\mathrm{label}})}+ \underbrace{(\hat{Q}^{\mathrm{TFM}}_{t+1}-\hat{\mathcal{T}}_{\mathcal{C}_t}\hat{Q}^{\mathrm{TFM}}_t)}_{\text{ICL error}}.
\end{equation}

Before bounding each term in the above equation, we justify why Assumption~\ref{assumption_1} is a reasonable assumption. It mirrors the standard approximation assumptions used in fitted Q-iteration~\cite{antos2007fitted} and approximate dynamic programming~\cite{busoniu2017reinforcement}. In TabQL, approximation error decomposes into (i) model error (i.e., ICL error as in Eq.~\ref{eq_15}) from amortized Bellman inference and (ii) statistical error from finite context. As context size grows, the latter vanishes at a certain rate (shown in label approximation error below),
while the former depends on pretraining and model capacity. This assumption is therefore no stronger than classical realizability and is strictly weaker than assuming direct representation of $Q^*$.
We next bound each term separately. For the Bellman term, we have by the standard properties of $\mathcal{T}$, 
$
    \|\mathcal{T}\hat{Q}^{\mathrm{TFM}}_t-\mathcal{T}Q^*\|_\infty\leq \gamma\|\hat{Q}^{\mathrm{TFM}}_t-Q^*\|_\infty.
$
We then bound the contextual approximation error. First, we let
$\varepsilon_{\mathrm{stat}}(t) := \sup_{s,a}\big|(\hat{\mathcal{T}}_{\mathcal{C}_t}\hat{Q}^{\mathrm{TFM}}_t)(s,a) - (\mathcal{T}\hat{Q}^{\mathrm{TFM}}_t)(s,a)\big|$.
To this end, we introduce an assumption regarding the generalization of TFM as follows.

\begin{assumption}\label{assumption_2}
The TFM $f_\phi\in\mathcal{F}$ satisfies $\sup_{s,a}|f_\phi(\mathcal{C},s,a)-Q^*(s,a)|\leq\epsilon$ with high probability over a context of size $K$, for a function class $\mathcal{F}$ with covering number $\mathcal{N}(\mathcal{F},\epsilon)$. A more technical justification appears in Appendix~\ref{justification}.
\end{assumption}

\begin{assumption}\label{assumption_3}
For each query pair $(s,a)$ encountered during execution, the context $\mathcal{C}_t$ is sufficiently rich for the TFM to estimate $\hat{P}_{\mathcal{C}_t}(\cdot|s,a)$ at the resolution required by Assumption~\ref{assumption_1}. In finite tabular settings this corresponds to at least $m_{\min}$ transitions in $\mathcal{C}_t$ sharing the query state $s$; in higher-dimensional settings it corresponds to sufficient nearest-neighbor coverage under the TFM's input metric.
\end{assumption}
Assumption~\ref{assumption_2} permits standard concentration arguments (Hoeffding plus union bound)~\cite{hoeffding1963probability,hertz2020improved}, while Assumption~\ref{assumption_3} guarantees the per-query sample size $m_t$ needed for those bounds. The latter is non-trivial: it holds in finite tabular MDPs with sufficient $\varepsilon$-greedy exploration and a buffer larger than $|\mathcal{S}|\cdot m_{\min}$, but fails in high-dimensional state spaces where exact $(s,a)$ revisitation within a finite context window approaches zero probability. In that regime, TabQL falls back to nearest-neighbor inference under the TFM's input metric and degrades gracefully toward the warm-up DQN's performance; we treat this as an explicit limitation in Section~\ref{sec:experiments}.

\textbf{Label approximation error.} A subtle but important consideration is that transitions in the buffer carry the label $\hat{Q}^{\mathrm{DQN}}(s,a)$ rather than the unobserved true value $Q^*(s,a)$. This induces a residual error
$
\varepsilon_{\mathrm{label}} := \|\hat{Q}^{\mathrm{DQN}} - Q^*\|_\infty,
$
inherited from the warm-up DQN at the moment of switching. This error is fixed once the warm-up phase ends and does not iterate or compound during ICL inference. It enters our analysis through the empirical kernel $\hat{P}_{\mathcal{C}_t}$, contributing additively to the statistical error term. Crucially, $\varepsilon_{\mathrm{label}} \to 0$ as the warm-up DQN converges to $Q^*$, formally justifying the warm-up design: a sufficiently long warm-up minimizes label error, while a premature switch leaves $\varepsilon_{\mathrm{label}}$ large and dominates the asymptotic gap.
Combining Hoeffding concentration on the empirical kernel with the label-induced bias gives, with probability at least $1-\delta$ ($\delta>0$):
$
\varepsilon_{\mathrm{stat}}(t) \;\leq\; \gamma\|\hat{Q}^{\mathrm{TFM}}_t\|_\infty\sqrt{\frac{2\log(|\mathcal{S}||\mathcal{A}|/\delta)}{m_t}} + \varepsilon_{\mathrm{label}}.
$
In TabQL, per-step errors are bounded by a fixed constant independent of $t$, in contrast to DQN where per-step errors are themselves functions of accumulated gradient noise. The standard $1/(1-\gamma)$ horizon factor still appears in our bounds — TabQL's advantage is the absence of \emph{gradient-propagated} compounding, not the absence of the discount factor. We are now ready to state the main result.

\begin{theorem}\label{theorem_1}
Let Assumptions~\ref{assumption_1}--\ref{assumption_3} hold. With probability at least $1-\delta$, for all $t \geq 1$,
$
\|\hat{Q}^{\mathrm{TFM}}_t - Q^*\|_\infty \leq \gamma^t\|\hat{Q}^{\mathrm{TFM}}_0 - Q^*\|_\infty + \sum_{\tau=0}^{t-1}\gamma^\tau\big(\varepsilon_{\mathrm{ICL}} + \varepsilon_{\mathrm{stat}}(\tau)\big),
$
where $\varepsilon_{\mathrm{stat}}(\tau) \leq \frac{\gamma}{1-\gamma}\sqrt{\frac{2\log(|\mathcal{S}||\mathcal{A}|/\delta)}{m_\tau}} + \varepsilon_{\mathrm{label}}$, with $m_\tau$ the per-query revisitation count guaranteed by Assumption~\ref{assumption_3}.
\end{theorem}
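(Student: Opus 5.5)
The plan is to derive a one-step recursion from the error decomposition in Eq.~\eqref{eq_15}, unroll it, and then bound $\varepsilon_{\mathrm{stat}}(\tau)$ uniformly in $\tau$ on a single good event.

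\textbf{One-step recursion.} Fix $t\ge 0$ and take the sup norm of each of the three terms in Eq.~\eqref{eq_15}.
\begin{itemize}
\item The Bellman-contraction term is at most $\gamma\|\hat{Q}^{\mathrm{TFM}}_t-Q^*\|_\infty$, since $\mathcal{T}$ is a $\gamma$-contraction with fixed point $Q^*$.
\item The statistical term is at most $\varepsilon_{\mathrm{stat}}(t)$ by definition.
\item The ICL term is at most $\varepsilon_{\mathrm{ICL}}$ by Assumption~\ref{assumption_1}.
\end{itemize}
Writing $e_t:=\|\hat{Q}^{\mathrm{TFM}}_t-Q^*\|_\infty$, this gives
$e_{t+1}\le \gamma e_t+\varepsilon_{\mathrm{ICL}}+\varepsilon_{\mathrm{stat}}(t)$.

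\textbf{Unrolling.} Induction on $t$ gives $e_t\le \gamma^t e_0+\sum_{\tau=0}^{t-1}\gamma^{t-1-\tau}(\varepsilon_{\mathrm{ICL}}+\varepsilon_{\mathrm{stat}}(\tau))$. The stated form carries the weights $\gamma^\tau$ instead. The two forms agree whenever the per-step errors are bounded by a $\tau$-independent constant $\bar\varepsilon$: both sums equal $\bar\varepsilon\sum_{j=0}^{t-1}\gamma^j$. This is exactly the regime the paper emphasizes, "per-step errors bounded by a fixed constant." I would therefore first bound $\varepsilon_{\mathrm{stat}}(\tau)$ uniformly by $\bar\varepsilon_{\mathrm{stat}}:=\frac{\gamma}{1-\gamma}\sqrt{2\log(|\mathcal{S}||\mathcal{A}|/\delta)/m_{\min}}+\varepsilon_{\mathrm{label}}$, using $m_\tau\ge m_{\min}$. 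I would then reindex $j=t-1-\tau$ so the sum reads in the stated form. If the intended statement has genuinely $\tau$-dependent $m_\tau$, I would simply note that reindexing is valid once each summand is replaced by its uniform upper bound.

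\textbf{Bounding $\varepsilon_{\mathrm{stat}}$.} Write $\hat{\mathcal{T}}_{\mathcal{C}_t}\hat Q-\mathcal{T}\hat Q$ as the sum of two pieces.
\begin{itemize}
\item The first piece is $\gamma\big(\mathbb{E}_{\hat P_{\mathcal{C}_t}}-\mathbb{E}_{P}\big)[\max_{a'}\hat Q(s',a')]$. The empirical kernel averages $m_t$ samples (Assumption~\ref{assumption_3}) of a variable bounded by $\|\hat Q^{\mathrm{TFM}}_t\|_\infty$. Hoeffding plus a union bound over $(s,a)$ gives $\gamma\|\hat Q^{\mathrm{TFM}}_t\|_\infty\sqrt{2\log(|\mathcal{S}||\mathcal{A}|/\delta)/m_t}$.
\item The second piece is the gap between evaluating $f_\phi(\mathcal{C}_t,\cdot)$, whose context carries DQN labels, and the target values. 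It contributes the additive $\varepsilon_{\mathrm{label}}$.
\end{itemize}
To turn $\|\hat Q^{\mathrm{TFM}}_t\|_\infty$ into $1/(1-\gamma)$, I would use Assumption~\ref{assumption_2}. It gives $\|\hat Q^{\mathrm{TFM}}_t\|_\infty\le\|Q^*\|_\infty+\epsilon\le 1/(1-\gamma)+\epsilon$, since rewards lie in $[0,1]$. Alternatively, if the TFM outputs are clipped to $[0,1/(1-\gamma)]$, the bound is directly $1/(1-\gamma)$.

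\textbf{Main obstacle.} The hardest step is making the probability statement hold "for all $t\ge1$" simultaneously. Hoeffding is applied per step with a data-dependent context that is not independent across $t$. My first attempt is to condition on the context and apply the bound with the failure probability split as $\delta_\tau=\delta/2^{\tau+1}$. That only inflates the logarithm by $O(\tau)$, which conflicts with the clean constant. So instead I would take the covering-number form of Assumption~\ref{assumption_2}: union-bound over an $\epsilon$-net of $\mathcal{F}$ and over $(s,a)$, so that one event of probability $1-\delta$ controls every step. I would absorb the $\log\mathcal{N}(\mathcal{F},\epsilon)$ factor and the $\epsilon$ slack into the constants in the stated bound, and flag this absorption explicitly.
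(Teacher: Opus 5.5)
Your core argument is the paper's proof. The paper takes sup norms in Eq.~\eqref{eq_15} and bounds the three terms by contraction, by the definition of $\varepsilon_{\mathrm{stat}}(t)$, and by Assumption~\ref{assumption_1}. It then unrolls with weights $\gamma^{t-1-\tau}$ and switches to $\gamma^\tau$ by appealing to a uniform bound on $\varepsilon_{\mathrm{stat}}$, exactly as you propose. Strictly, both arguments give the bound with $\bar\varepsilon_{\mathrm{stat}}$ in every summand, not the literal $\gamma^\tau\varepsilon_{\mathrm{stat}}(\tau)$ form.

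The bound on $\varepsilon_{\mathrm{stat}}$ appears inside the proof of Theorem~\ref{theorem_2} and matches your first two bullets. It uses Hoeffding plus a union bound over the $|\mathcal{S}||\mathcal{A}|$ pairs, an additive $\varepsilon_{\mathrm{label}}$, and $\|\hat Q^{\mathrm{TFM}}_t\|_\infty\le 1/(1-\gamma)$, which the paper simply asserts from bounded rewards. Bounded rewards only bound $Q^*$, not the TFM's outputs, so your clipping alternative is the clean justification. Routing through Assumption~\ref{assumption_2} instead changes the coefficient to $\gamma(1/(1-\gamma)+\epsilon)$. It also spends failure probability, since that assumption holds only with high probability.

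The one place you go beyond the paper, the ``main obstacle'' paragraph, does not work as described. An $\epsilon$-net over $\mathcal{F}$ makes Hoeffding uniform over the data-dependent integrand $\max_{a'}\hat Q(s',a')$ for a \emph{single} empirical kernel. But each $t$ has its own kernel $\hat P_{\mathcal{C}_t}$, built from a different window of transitions. You therefore still need a union over infinitely many $t$, and a cover of $\mathcal{F}$ contributes nothing to that.

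Worse, the context size $K$ (hence $m_t$) stays bounded. So the kernel-deviation piece $\gamma\sup_{s,a}|(\hat P_{\mathcal{C}_t}-P)g|$ will in general exceed a fixed nontrivial threshold of the form $\frac{\gamma}{1-\gamma}\sqrt{2\log(|\mathcal{S}||\mathcal{A}|/\delta)/m}$ on infinitely many disjoint windows almost surely, by a Borel--Cantelli argument. That concentration piece cannot hold for all $t$ simultaneously with a $t$-free logarithm, by any argument. In addition, absorbing $\log\mathcal{N}(\mathcal{F},\epsilon)$ and the $\epsilon$ slack into constants means you would be proving a different bound than the one stated.

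You have three honest options. You can state the concentration for each fixed $t$. You can restrict to a finite horizon $T$ with $\log(T|\mathcal{S}||\mathcal{A}|/\delta)$. Or you can use $\delta_\tau=\delta/2^{\tau+1}$ and accept the growing logarithm you rejected. The paper does none of these and just applies the per-step Hoeffding bound, so dropping your covering step reproduces its proof at its level of rigor.
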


\textbf{Asymptotic behavior and performance loss.} Taking $t \to \infty$ in Theorem~\ref{theorem_1} and applying the dominated convergence theorem (full derivation in Appendix~\ref{asymptotic_derivation}), the limsup of the value error is bounded by $\mathcal{O}((\varepsilon_{\mathrm{ICL}} + \varepsilon_{\mathrm{label}})/(1-\gamma))$ plus a residual statistical term that vanishes as $m_{\min}$ grows. Combining with the standard performance-difference inequality~\cite{antos2008learning,munos2008finite} yields the asymptotic suboptimality:
$
V^*(s) - V^{\pi_\infty}(s) \;\leq\; \frac{2\gamma(\varepsilon_{\mathrm{ICL}} + \varepsilon_{\mathrm{label}})}{(1-\gamma)^2} + \mathcal{O}\!\left(\frac{1}{(1-\gamma)^3}\sqrt{\frac{\log(|\mathcal{S}||\mathcal{A}|/\delta)}{m_{\min}}}\right).
$
TabQL therefore achieves near-optimal control whenever both $\varepsilon_{\mathrm{ICL}}$ and $\varepsilon_{\mathrm{label}}$ are small. The error structure differs fundamentally from DQN: DQN's per-step error is driven by SGD gradient noise that compounds across iterations, while TabQL's per-step error is bounded by fixed constants that do not grow with $t$. Both methods inherit the $1/(1-\gamma)$ horizon factor, but TabQL avoids the $\sum_t \alpha_t^2$ variance accumulation; we elaborate this comparison in Appendix~\ref{dqn_comparison_full}.

\begin{theorem}\label{theorem_2}
Let Assumptions~\ref{assumption_1}--\ref{assumption_3} hold, rewards be bounded in $[0,1]$, and suppose both $\varepsilon_{\mathrm{ICL}} \leq c(1-\gamma)\epsilon$ and $\varepsilon_{\mathrm{label}} \leq c(1-\gamma)\epsilon$ for some constant $c < 1/3$. Then TabQL achieves $\|\hat{Q}^{\mathrm{TFM}}_t - Q^*\|_\infty \leq \epsilon$ with probability at least $1-\delta$ after a total of
$
N=\tilde{\mathcal{O}}\!\left(\frac{1}{(1-\gamma)^4\epsilon^2}\log\frac{|\mathcal{S}||\mathcal{A}|}{\delta}+\frac{1}{1-\gamma}\log\frac{1}{\epsilon}\right).
$
environment interactions, where the first term is the context-quality cost and the second is the contraction horizon.
\end{theorem}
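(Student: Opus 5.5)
The plan is to derive Theorem~\ref{theorem_2} directly from Theorem~\ref{theorem_1}. We split the target accuracy $\epsilon$ into three budgets: one for the contraction (initialization) term, one for the fixed biases $\varepsilon_{\mathrm{ICL}}+\varepsilon_{\mathrm{label}}$, and one for the Hoeffding part of $\varepsilon_{\mathrm{stat}}$. For each budget we then count the environment interactions needed. Since rewards lie in $[0,1]$, we may take $\hat{Q}^{\mathrm{TFM}}_0$ and $Q^*$ in $[0,1/(1-\gamma)]$, so that $\|\hat{Q}^{\mathrm{TFM}}_0-Q^*\|_\infty\le 1/(1-\gamma)$. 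We bound $\sum_{\tau<t}\gamma^\tau\le 1/(1-\gamma)$ throughout.

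\textbf{Step 1 (bias terms).} Substituting the bound on $\varepsilon_{\mathrm{stat}}(\tau)$ into Theorem~\ref{theorem_1} gives
\[
\|\hat{Q}^{\mathrm{TFM}}_t-Q^*\|_\infty\le \frac{\gamma^t}{1-\gamma}+\frac{\varepsilon_{\mathrm{ICL}}+\varepsilon_{\mathrm{label}}}{1-\gamma}+\sum_{\tau=0}^{t-1}\gamma^\tau\frac{\gamma}{1-\gamma}\sqrt{\frac{2\log(|\mathcal{S}||\mathcal{A}|/\delta)}{m_\tau}}.
\]
Under the hypotheses $\varepsilon_{\mathrm{ICL}},\varepsilon_{\mathrm{label}}\le c(1-\gamma)\epsilon$, the second term is at most $2c\,\epsilon$. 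Because $c<1/3$, this leaves a slack of $(1-2c)\epsilon>\epsilon/3$, and we divide that slack evenly between the remaining two terms.

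\textbf{Step 2 (contraction horizon).} We require $\gamma^t/(1-\gamma)\le(1-2c)\epsilon/2$. Using $\log(1/\gamma)\ge 1-\gamma$, it suffices to take $t\ge \frac{1}{1-\gamma}\log\frac{2}{(1-2c)(1-\gamma)\epsilon}$. Up to the logarithmic factor in $1/(1-\gamma)$, which $\tilde{\mathcal{O}}$ absorbs, this is the second term $\frac{1}{1-\gamma}\log\frac1\epsilon$. Each iteration consumes one environment step, so this term counts interactions directly.

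\textbf{Step 3 (context quality; main obstacle).} We require the statistical sum to be at most $(1-2c)\epsilon/2$. Bounding $m_\tau\ge m_{\min}$ uniformly, it suffices that $\frac{\gamma}{(1-\gamma)^2}\sqrt{2\log(|\mathcal{S}||\mathcal{A}|/\delta)/m_{\min}}\le(1-2c)\epsilon/2$. This gives
\[
m_{\min}\gtrsim \frac{\log(|\mathcal{S}||\mathcal{A}|/\delta)}{(1-\gamma)^4\epsilon^2}.
\]
The delicate step is turning a per-query revisitation count $m_{\min}$ into a count of environment interactions. Here we invoke Assumption~\ref{assumption_3}: the context and buffer must be populated so that every encountered query has $m_{\min}$ supporting transitions. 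We treat the coverage constants (the visitation probabilities under $\varepsilon$-greedy exploration) as absorbed in $\tilde{\mathcal{O}}$, so collecting the buffer costs $\tilde{\mathcal{O}}(m_{\min})$ interactions. If a more honest accounting is wanted, a factor of order $|\mathcal{S}||\mathcal{A}|$ or the inverse minimum visitation probability would appear; we will flag this as hidden in $\tilde{\mathcal{O}}$.

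\textbf{Step 4 (probability and conclusion).} We note that the high-probability event of Theorem~\ref{theorem_1} already holds uniformly over $t$ at level $1-\delta$, so no further union bound is needed. Adding the interactions from Step~2 and Step~3, the total $N$ guarantees $\|\hat{Q}^{\mathrm{TFM}}_t-Q^*\|_\infty\le 2c\epsilon+(1-2c)\epsilon=\epsilon$, which gives the stated $N$.
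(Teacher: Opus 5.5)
Your proposal follows the paper's proof essentially step for step. It invokes Theorem~\ref{theorem_1}, bounds $\|\hat{Q}^{\mathrm{TFM}}_0-Q^*\|_\infty\le 1/(1-\gamma)$, extracts the horizon $t=\mathcal{O}(\frac{1}{1-\gamma}\log\frac{1}{\epsilon})$ from the $\gamma^t$ term and $m\gtrsim \log(|\mathcal{S}||\mathcal{A}|/\delta)/((1-\gamma)^4\epsilon^2)$ from the Hoeffding term, and converts $m$ into interactions through a visitation constant absorbed into $\tilde{\mathcal{O}}$; the paper does exactly this, and you are right to flag that this constant can be $\Theta(|\mathcal{S}||\mathcal{A}|)$, not logarithmic. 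Your budget split ($2c\epsilon$ for the two biases, $(1-2c)\epsilon/2$ each for the contraction and statistical terms) actually uses the $c<1/3$ hypothesis consistently, whereas the paper assigns $\epsilon/3$ to four separate terms and so only closes at $4\epsilon/3$.
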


Theorem~\ref{theorem_2} states a conditional sample complexity: the bound holds when the warm-up phase produces a sufficiently accurate teacher (small $\varepsilon_{\mathrm{label}}$) and the TFM achieves uniform regression accuracy (small $\varepsilon_{\mathrm{ICL}}$). The complexity covers both the warm-up interactions and the subsequent online interactions used to build the context, since each environment step adds one transition to the buffer. When the assumptions are satisfied, the dominant term is $\tilde{\mathcal{O}}(1/((1-\gamma)^4\epsilon^2))$.

\textbf{Conditional comparison with DQN.} The standard DQN sample complexity is $N_{\mathrm{DQN}}=\tilde{\mathcal{O}}(|\mathcal{S}||\mathcal{A}|/((1-\gamma)^4\epsilon^2))$~\cite{li2024q}, dominated by the per-state-action visitation cost. Under Assumptions~\ref{assumption_1}--\ref{assumption_3}, TabQL's interaction cost scales as $\tilde{\mathcal{O}}(c \cdot m_{\min} / \varepsilon_{\mathrm{ICL}}^2)$, where $c$ is the visitation constant in Theorem~\ref{theorem_2} and $m_{\min}$ is the per-query revisitation count required by the TFM (full derivation in Appendix~\ref{dqn_comparison_full}). The product $c \cdot m_{\min}$ replaces DQN's $|\mathcal{S}||\mathcal{A}|$ factor. Under uniform exploration, $c = \Theta(|\mathcal{S}||\mathcal{A}|)$ and the apparent gain vanishes. However, when exploration concentrates on the relevant subset of the state-action space and the TFM achieves uniform accuracy from a sub-state-space-sized context (so that $c \cdot m_{\min} \ll |\mathcal{S}||\mathcal{A}|$), TabQL achieves a strict improvement.

\textbf{Early switching failure.} A premature switch (small $T_0$) leaves both the initial approximation error $\|\hat{Q}^{\mathrm{TFM}}_0 - Q^*\|_\infty$ and $\varepsilon_{\mathrm{label}}$ large, and the asymptotic bound becomes dominated by $\varepsilon_{\mathrm{label}}/(1-\gamma)$ — a quantity independent of $t$ and $m_{\min}$ that cannot be corrected by additional context or longer ICL inference, since biased labels drive the contraction toward a biased fixed point rather than $Q^*$. The corrected theory therefore predicts a threshold behavior: $T_0$ must be large enough for $\varepsilon_{\mathrm{label}}$ to fall below the level at which the contraction can recover, after which further increases yield no additional benefit. This matches the empirical pattern we observe in Section~\ref{sec:experiments}; we provide the full mechanistic analysis in Appendix~\ref{early_switching_full}.

\section{Numerical Results}\label{sec:experiments}
\vspace{-0.1in}

We evaluate TabQL across three Gymnasium grid-world environments — Taxi-v3, CliffWalking-v1, and FrozenLake-v1~\cite{beikmohammadi2022nars} — and study three aspects: (i) sample efficiency relative to baselines, (ii) generalization across initial conditions, and (iii) sensitivity to key hyperparameters. All experiments use five random seeds; reported curves show mean returns with shaded standard deviation, and baseline hyperparameters follow standard settings from Hugging Face and RL Zoo~\cite{sanghi2024foundation}.



\textbf{Comparative study.} We compare TabQL against five baselines: tabular Q-learning, DQN, Double DQN, Dueling DQN, and Fitted Q-Iteration (FQI)~\cite{antos2007fitted}. Figures~\ref{fig:Cliffwalking_tabQL}--\ref{fig:FrozenLake_tabQL} show that both TabQL variants (TabPFN and TabDPT backbones) reach near-optimal performance with substantially fewer environment interactions than the deep RL baselines. The qualitative pattern is consistent across all three environments: TabQL exhibits a sharp transition from baseline-DQN behavior to near-optimal returns immediately after the ICL switching point ($T_0$), often within a few additional steps. The deep RL baselines (DQN, Double DQN, Dueling DQN) require an order of magnitude more interactions to reach comparable returns. Tabular Q-learning eventually converges on Taxi and CliffWalking, but is much slower than TabQL. FQI achieves competitive asymptotic performance on Taxi and CliffWalking, comparable to TabQL, but operates as a batch algorithm rather than online. It requires the full dataset upfront and produces no learning curve. We include it as a strong asymptotic baseline. On CartPole-v1 (Figure~\ref{fig:CartPole_tabQL}), TabQL with the TabPFN backbone reaches the optimal return of 500 substantially faster than DQN, Double DQN, and Dueling DQN, demonstrating that the framework extends beyond discrete observation spaces. The two TabQL variants (TabPFN and TabDPT) perform similarly across environments, suggesting the result is robust to the specific choice of TFM backbone.



\begin{figure*}[t]
    \centering
    \begin{subfigure}[t]{0.48\textwidth}
        \centering
        \includegraphics[width=\linewidth]{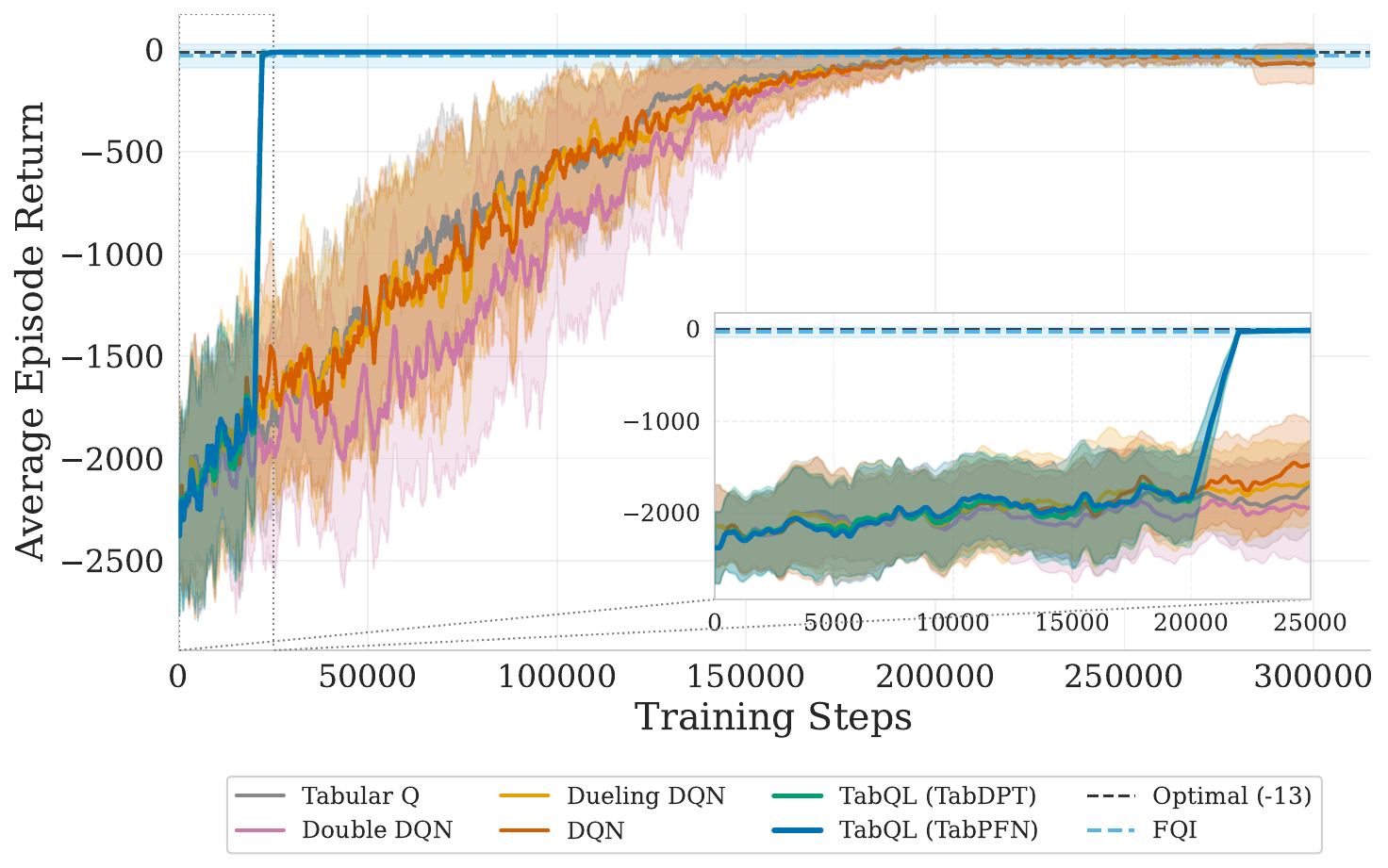}
        \caption{CliffWalking-v1}
        \label{fig:Cliffwalking_tabQL}
    \end{subfigure}
    \hfill
    \begin{subfigure}[t]{0.48\textwidth}
        \centering
        \includegraphics[width=\linewidth]{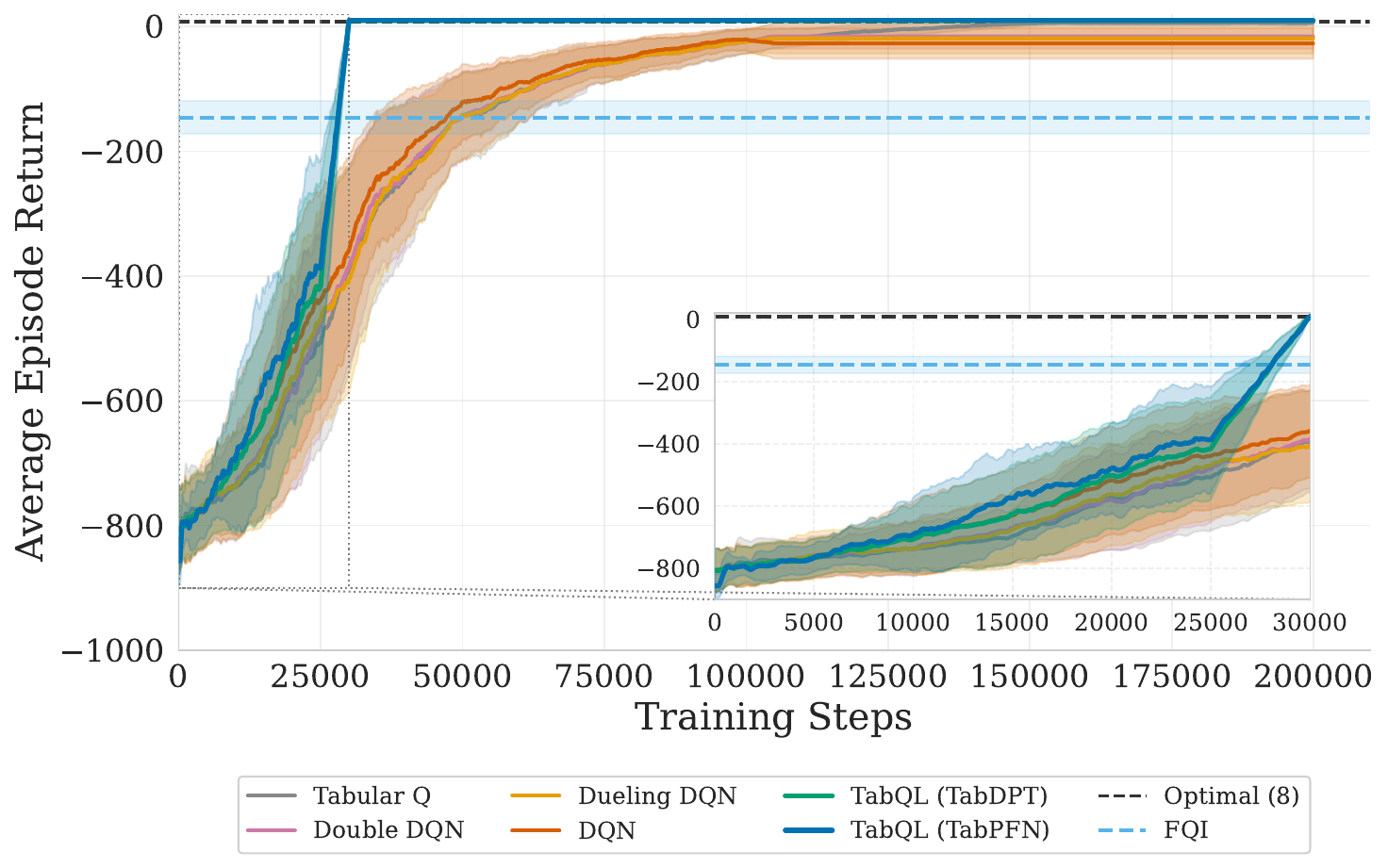}
        \caption{Taxi-v3}
        \label{fig:Taxi_tabQL}
    \end{subfigure}


    \begin{subfigure}[t]{0.48\textwidth}
        \centering
        \includegraphics[width=\linewidth]{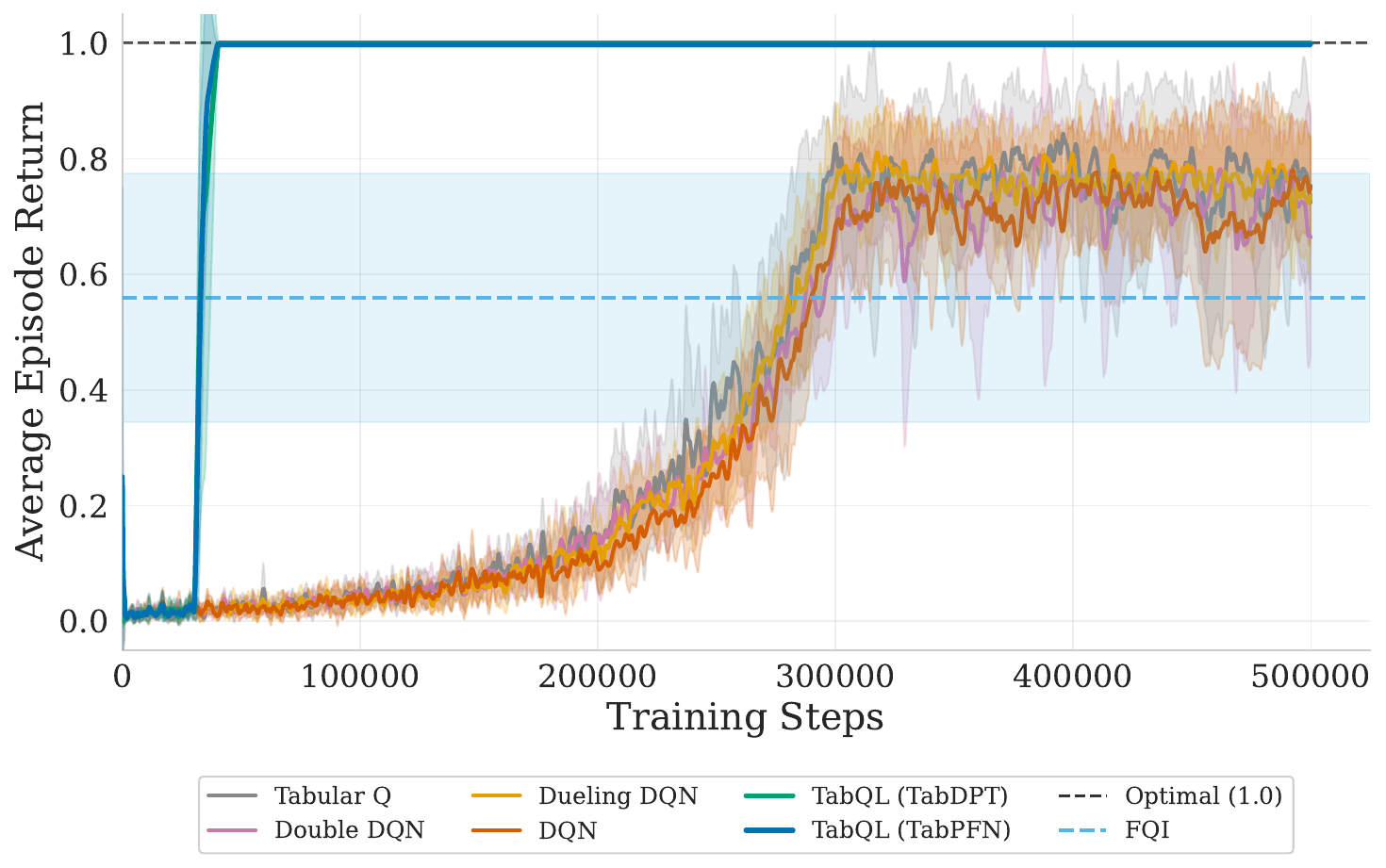}
        \caption{FrozenLake-v1}
        \label{fig:FrozenLake_tabQL}
    \end{subfigure}
    \hfill
    \begin{subfigure}[t]{0.48\textwidth}
        \centering
        \includegraphics[width=\linewidth]{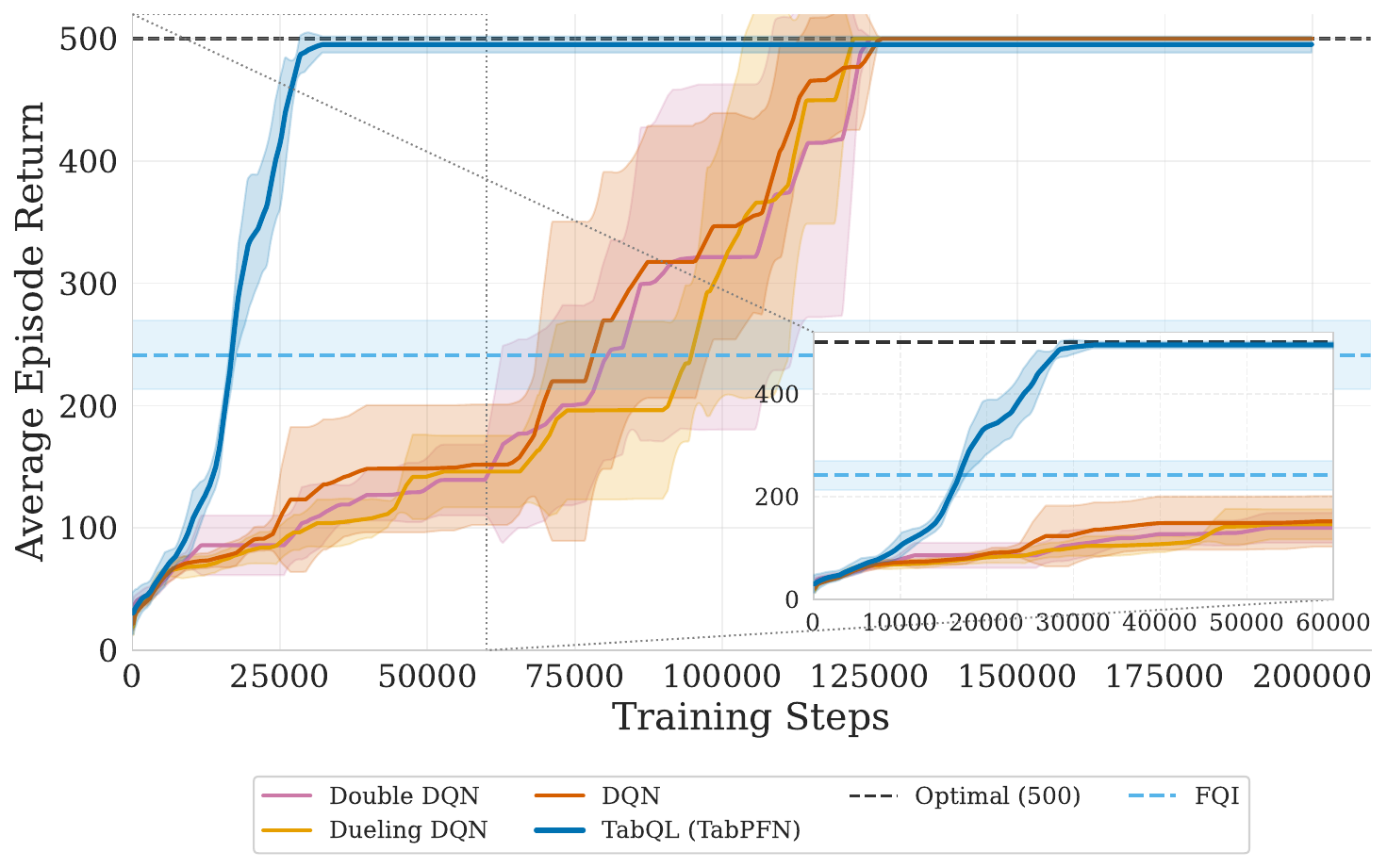}
        \caption{CartPole-v1}
        \label{fig:CartPole_tabQL}
    \end{subfigure}

    \vspace{0.05in}

    \caption{
    TabQL (TabPFN/TabDPT) vs.\ five baselines (Tabular Q, DQN, Double DQN, Dueling DQN, FQI) across three discrete and one continuous-observation environment.
    Solid lines: mean over 5 random seeds; shaded: standard deviation.
    }
    \label{fig:comparative_study}
    \vspace{-0.2in}
\end{figure*}

\textbf{Switching point $T_0$.} Figure~\ref{fig:switching_point} shows TabQL exhibits a clear threshold behavior in $T_0$, as predicted by the early-switching analysis in Section~\ref{sec:theory}. With $T_0$ below the threshold (e.g., 3,000 steps in Taxi), TabQL converges to a suboptimal policy and cannot recover even with continued DQN label correction; once $T_0$ crosses the threshold (e.g., $\geq 5{,}000$ in Taxi), TabQL reliably converges to optimal returns and further increases yield no benefit. The same pattern holds in CliffWalking with a different threshold value, confirming that informative context initialization is essential.

\begin{figure*}[t]
    \centering
    \begin{subfigure}[t]{0.48\textwidth}
        \centering
        \includegraphics[width=\linewidth]{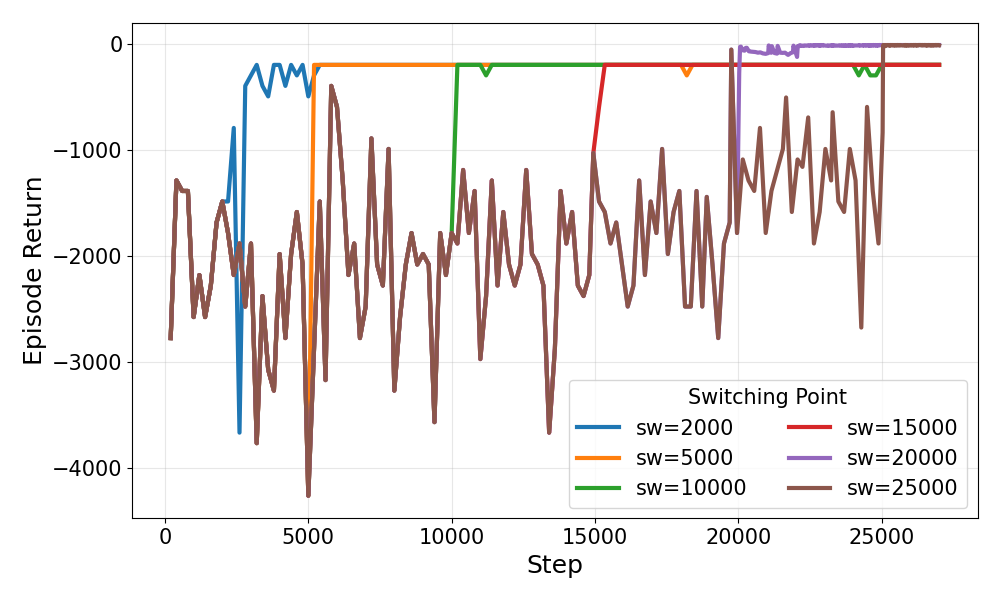}
        \caption{CliffWalking-v1}
        \label{fig:switching_point_cliffwalking}
    \end{subfigure}
    \hfill
    \begin{subfigure}[t]{0.48\textwidth}
        \centering
        \includegraphics[width=\linewidth]{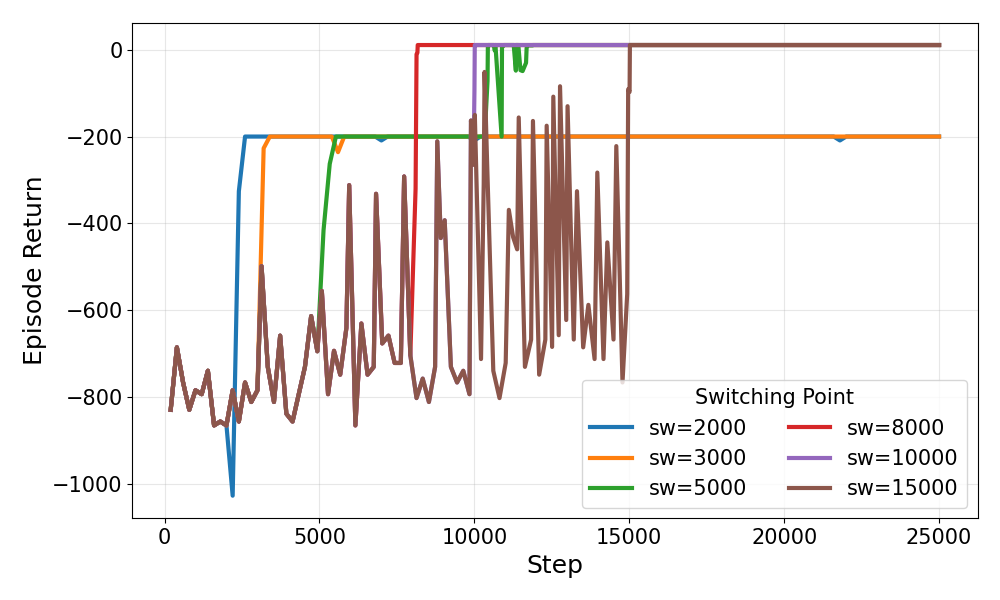}
        \caption{Taxi-v3}
        \label{fig:switching_point_taxi}
    \end{subfigure}
    \caption{
    Switching point analysis: Varying warm-up length $T_0$ reveals a threshold: small $T_0$ yields suboptimal TabQL due to poorly converged DQN (large $\varepsilon_{\mathrm{label}}$); beyond an environment-specific threshold, TabQL consistently reaches the optimal policy, with no gains from further increases.
    }
    \label{fig:switching_point}
    \vspace{-0.2in}
\end{figure*}

\noindent\textbf{Impact of context size.}
We investigate the effect of context size on model performance by evaluating multiple context size settings. This ablation aims to understand how contextual information influences convergence and robustness. Figures~\ref{fig:context_length_cliffwalking} and~\ref{fig:context_length_taxi} illustrate the training progress under different context lengths. In most cases, increasing the context size helps accelerate training and stabilize the learning curve, but only up to a certain point. Beyond this threshold, further increasing the context size yields little to no additional improvement in training behavior. Moreover, in some cases, simply increasing the context size does not lead to better training performance.

\begin{figure*}[t]
    \centering
    \begin{subfigure}[t]{0.48\textwidth}
        \centering
        \includegraphics[width=\linewidth]{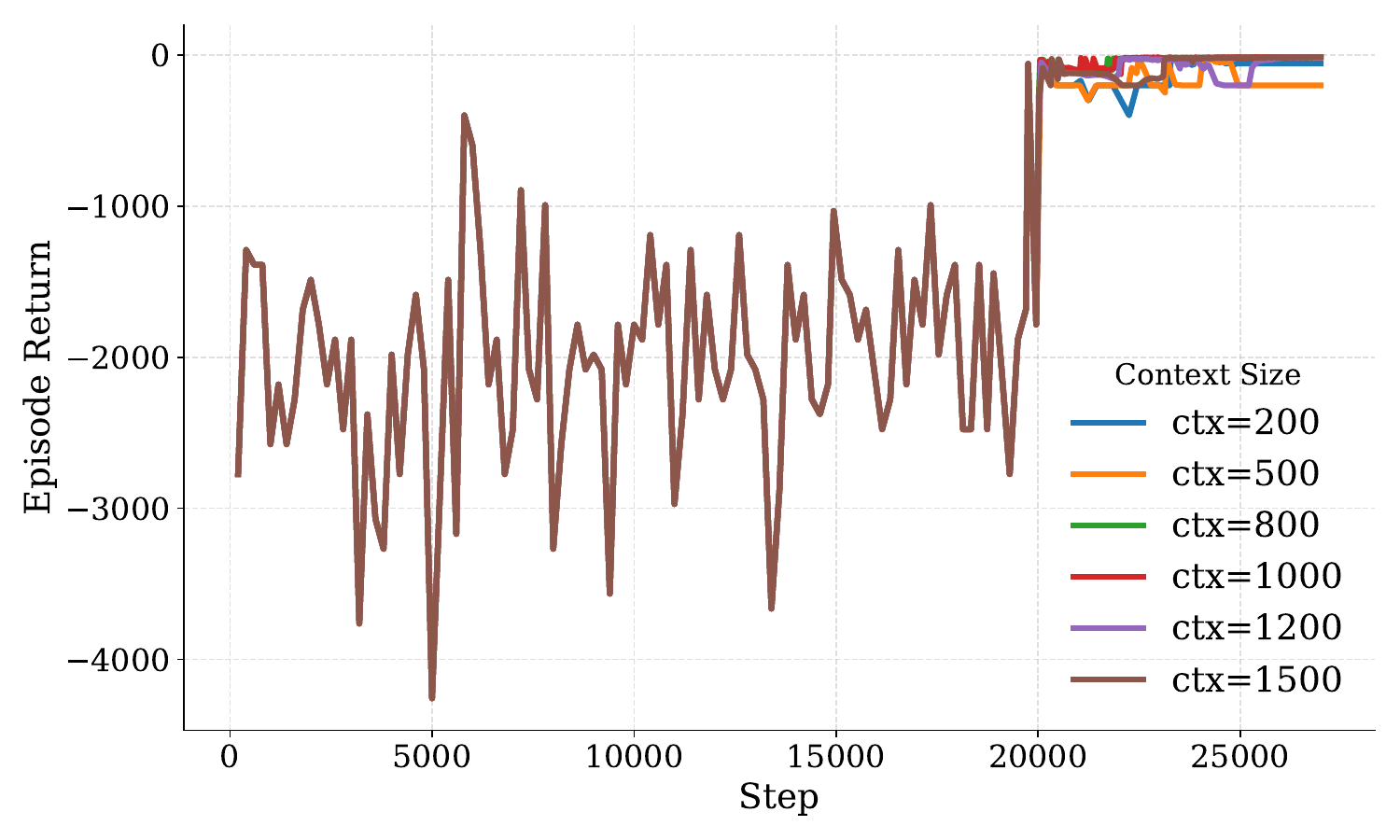}
        \caption{CliffWalking-v1}
        \label{fig:context_length_cliffwalking}
    \end{subfigure}
    \hfill
    \begin{subfigure}[t]{0.48\textwidth}
        \centering
        \includegraphics[width=\linewidth]{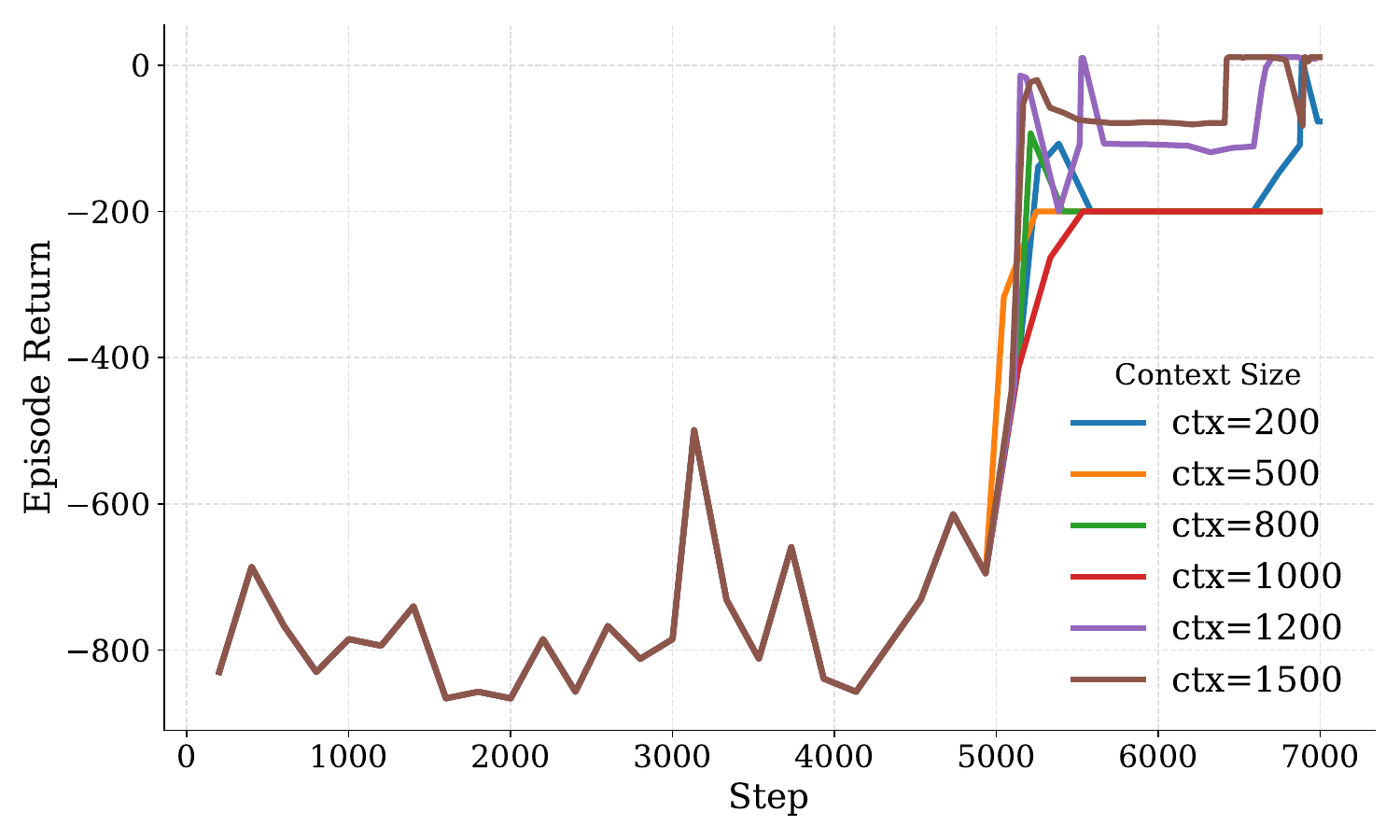}
        \caption{Taxi-v3}
        \label{fig:context_length_taxi}
    \end{subfigure}
    \caption{Impact of context size $K$ on TabQL training. Increasing $K$ accelerates training and stabilizes the learning curve up to a saturation point, beyond which further increases yield diminishing returns.}
    \label{fig:context_size}
    \vspace{-0.2in}
\end{figure*}

\textbf{Generalization.} We evaluate generalization by training agents under multiple environment initializations and aggregating their experience into a shared context for the TFM. At evaluation, TabQL performs in-context inference on unseen initial conditions. As shown in Figure~\ref{fig:generalization}(a), TabQL generalizes consistently across both seen and unseen initial states with lower variance, while DQN frequently fails to transfer and occasionally collapses entirely. This indicates that aggregating experience across seeds lets TabQL learn transferable value inference rather than memorizing trajectory statistics. Feature-encoding details (state decoding, episode timestep, and the cross-seed initial-state feature) are described in Appendix~\ref{feature_selection}.

\begin{figure*}[t]
    \centering
    \begin{subfigure}[t]{0.48\textwidth}
        \centering
        \includegraphics[width=\linewidth]{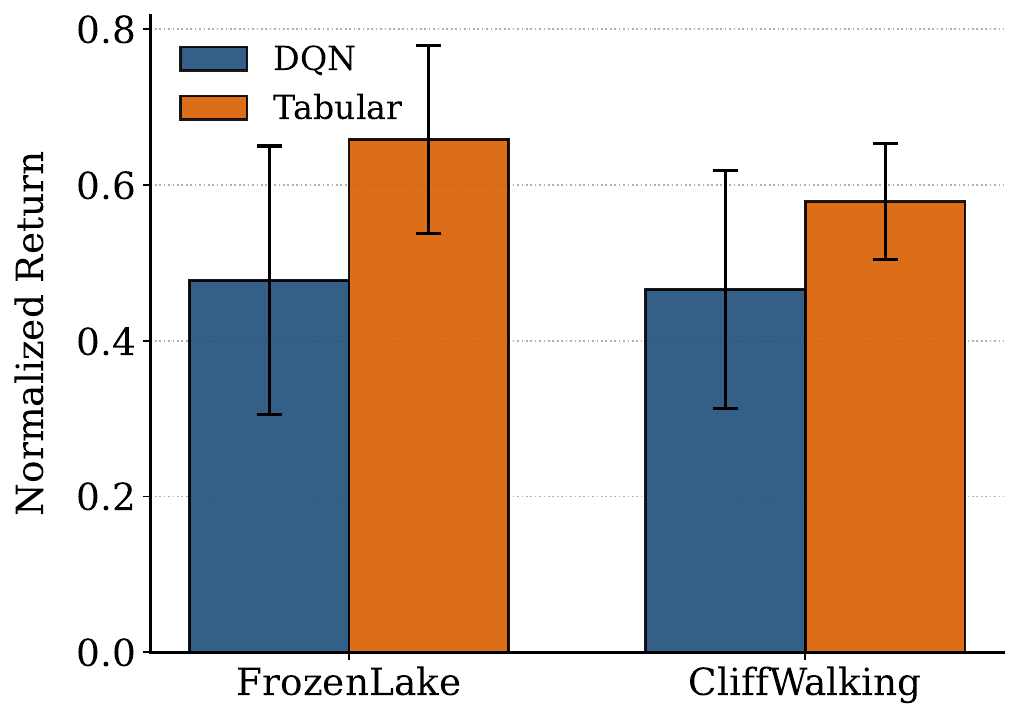}
        \caption{Generalization across environments}
        \label{fig:across_seeds}
    \end{subfigure}
    \hfill
    \begin{subfigure}[t]{0.48\textwidth}
        \centering
        \includegraphics[width=\linewidth]{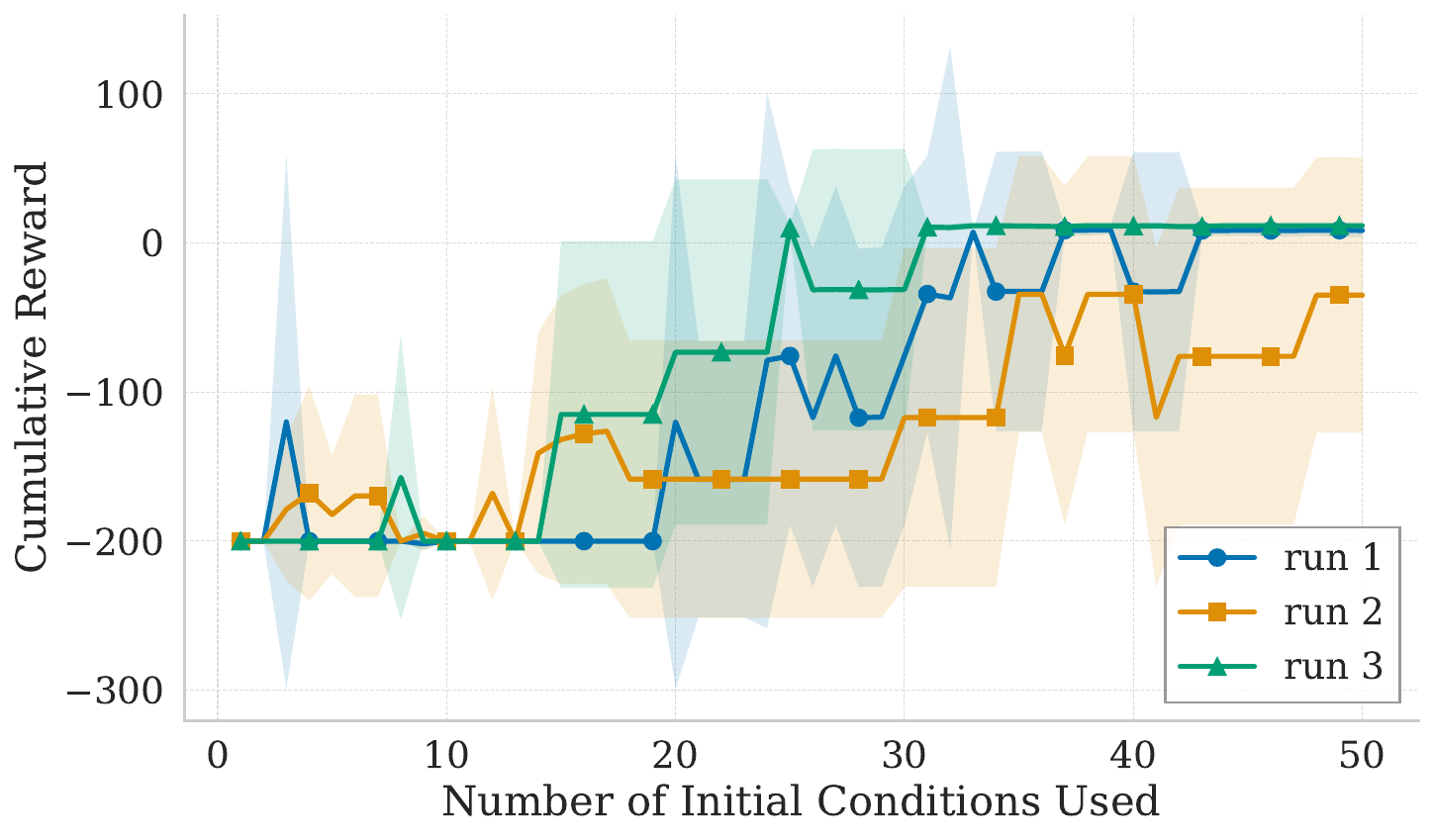}
        \caption{TFM-only generalization on Taxi}
        \label{fig:tab_cross_seeds}
    \end{subfigure}
    \caption{
    (a) Performance across environments: bars show mean normalized return (scaled to [0,1]) on unseen initial states, averaged over independently trained agents.
    (b) TFM-only generalization on Taxi: each run samples 50 of 100 initial conditions; the x-axis is the number included in the TFM context, evaluated on 5 unseen conditions using mean cumulative return.
    }
    \label{fig:generalization}
    \vspace{-0.15in}
\end{figure*}

\textbf{Effectiveness of TFM.}
We perform an additional ablation to isolate the generalization ability of the tabular foundation model (TFM), independent of the full TabQL pipeline. Instead of end-to-end training, we train DQN agents under 100 random initial conditions and randomly select 50 converged agents to collect optimal trajectories for fitting the TFM. The learned TFM is then evaluated on 5 unseen seeds not used during context construction. Figure~\ref{fig:tab_cross_seeds} shows that as the number of seed trajectories in the context increases, performance on all test seeds improves and stabilizes. Once sufficient seed diversity is included, the TFM consistently achieves strong performance across unseen environments, indicating that diverse contextual experience enables the model to learn robust state–action patterns and generalize beyond training seeds.

\textbf{Limitations.} A key limitation is identifying when to switch to in-context learning. Candidate metrics were noisy or environment-dependent, so the switching point remains a tunable hyperparameter. Second, the tabular foundation model suits low-dimensional state–action spaces (e.g., grid worlds). Extending TabQL to high-dimensional inputs like images or large continuous spaces will require more expressive models, a promising future direction.

\section{Conclusion and Broader Impact}
\vspace{-0.1in}
We proposed TabQL, a framework reframing action-value learning as in-context Bellman inference using tabular foundation models. By amortizing Bellman updates into pretrained inference and decoupling learning from online optimization, TabQL improves sample efficiency, stability, and theoretical structure, with convergence and sample-complexity guarantees. A short DQN warm-up provides informative context, and we identified and theoretically justified a failure mode from premature switching. TabQL bridges classical Q-learning, deep RL, and foundation models, pointing to adaptive switching and richer context as future directions. By reducing online-data demands, TabQL benefits costly real-world domains (robotics, healthcare, autonomous systems), though computational cost, pretraining quality, and bias considerations remain essential for safe deployment.

\clearpage
\nocite{langley00}

\bibliography{reference}
\bibliographystyle{unsrt}

\newpage
\appendix
\section{Appendix}
In this section, we present additional related work and analysis, technical discussion, proofs of the main results, and additional experimental results for our proposed TabQL. We start with the additional related work.

\subsection{Additional Related Work}\label{additional_related_work}
Several recent works have explored amortizing expensive planning or search procedures into learned value functions. Notably, SAVE~\cite{hamrick2019combining} amortizes Monte Carlo Tree Search (MCTS)–derived Q-value estimates into a parametric Q-function, leveraging search as a supervisory signal to improve sample efficiency and stability. This approach demonstrates that transferring computationally intensive reasoning into a reusable estimator can significantly accelerate reinforcement learning. TabQL shares this high-level motivation but differs fundamentally in mechanism: rather than amortizing search-derived labels, TabQL amortizes Bellman inference itself through in-context learning. As a result, TabQL does not rely on explicit planning or search rollouts, and its error profile is governed by contextual approximation and statistical coverage rather than the quality of search-generated targets. This distinction highlights complementary strengths: while SAVE inherits robustness from explicit planning, TabQL emphasizes flexibility and generalization via pretrained inference models. A systematic comparison between amortized search and amortized Bellman inference remains an interesting direction for future work. 

The broader perspective of replacing repeated optimization with learned inference has been extensively discussed in the amortized optimization literature~\cite{amos2023tutorial}. In this view, a model is trained to approximate the solution map of an optimization problem, trading iterative computation for fast inference at test time. TabQL fits naturally within this framework by treating Bellman optimality as an implicit optimization problem whose solution, namely the optimal Q-function, is approximated via amortized in-context inference. Unlike classical amortized optimization approaches that rely on explicit supervised training over solution instances, TabQL leverages the emergent in-context learning capabilities of foundation models, enabling adaptation without explicit re-optimization. Making this connection explicit clarifies that TabQL is not merely a new RL algorithm, but an instantiation of amortized optimization principles applied to reinforcement learning dynamics.

Subsequent work on sequence-based models has extended these ideas to hybrid or online regimes and to large discrete action spaces, including methods such as BraVE~\cite{landers2024brave} which combines value estimation and sequence modeling to scale decision-making. While these approaches share TabQL’s use of sequence models, their objectives differ substantially: they typically model policies or returns directly and do not enforce Bellman consistency or Q-function optimality. In contrast, TabQL preserves the structure of Q-learning and explicitly targets Bellman inference through contextual conditioning. This distinction is crucial for understanding TabQL’s theoretical guarantees and its ability to interpolate between vanilla Q-learning, DQN, and in-context learning paradigms.
\subsection{Justification on the Statistical Error}\label{justification}
Our stated sup-norm bound $\varepsilon_{stat}=\mathcal{O}(\sqrt{\frac{\text{log}(|\mathcal{S}||\mathcal{A}|/\delta)}{K}})$ relies on the TFM’s ability to generalize across state–action pairs. More formally, let $\mathcal{F}$ denote the function class induced by the pretrained TFM architecture. If $Q^*\in\mathcal{F}$ (realizability) and the covering number $\mathcal{N}(\mathcal{F},\epsilon)$ is sufficiently small, standard uniform convergence arguments yield sup-norm error scaling as $\sqrt{\text{log}\mathcal{N}(\mathcal{F},\epsilon)/K}$. In practice, the TFM shares parameters across states and actions, so $\text{log}\mathcal{N}(\mathcal{F},\epsilon)\ll|\mathcal{S}||\mathcal{A}|$. Moreover, the rolling context is sampled from a concentrability-covered distribution, reducing the effective number of state–action pairs for which high-accuracy estimation is required. Together, these factors justify the $\sqrt{\frac{\text{log}(|\mathcal{S}||\mathcal{A}|/\delta)}{K}}$ dependence without requiring enumeration over the entire product space.
\subsection{Remark on stochasticity and error propagation}\label{stochasticity}
While the tabular foundation model produces deterministic Q-value estimates conditioned on a fixed context, the context itself is a random object generated by a non-stationary policy. As a result, statistical error is not eliminated, but its role differs fundamentally from that in stochastic-gradient-based Q-learning. In TabQL, sampling noise enters through the construction of an approximate Bellman operator induced by the context and remains fixed during subsequent inference, whereas in DQN, stochasticity enters at every update step and accumulates through repeated stochastic approximation. Our analysis captures this distinction by treating the context-induced operator as a biased but stationary approximation, whose error is contracted rather than iteratively amplified. Although this abstraction does not exactly mirror the on-policy context construction used in practice, it reflects the empirical regime in which the policy and context distribution evolve on a slower timescale than Bellman contraction, a standard assumption in approximate dynamic programming.

\subsection{Methodological Comparison}

We summarize the methodological comparison between DQN and TabQL in Table~\ref{table:comparison} for more detailed reference. The comparison highlights a fundamental shift in how learning is performed in RL. DQN relies on repeated stochastic optimization to approximate the Bellman operator, leading to high interaction cost, accumulated statistical error, and sensitivity to training dynamics. In contrast, TabQL moves the core learning burden into pretrained inference, using context to execute Bellman-style reasoning without iterative updates. This decoupling of learning from online interaction yields substantially improved sample efficiency and stability, at the cost of moderately higher per-step computation and memory usage. As a result, TabQL is particularly well suited for interaction-limited settings such as real-world systems or expensive simulators, where reducing environment interaction is more critical than minimizing inference cost, while DQN remains preferable in regimes where interaction is cheap and computation is constrained. 

\begin{table}[t]
  \caption{Methodological comparison of TabQL with the baselines used in our experiments. Double DQN and Dueling DQN inherit DQN's column except for known architectural refinements.}
  \label{table:comparison}
  \begin{center}
    \scriptsize
    \setlength{\tabcolsep}{4pt}
      \begin{tabular}{lcccc}
        \toprule
        Aspect & Tabular Q & DQN / Double / Dueling & FQI & TabQL \\
        \midrule
        Bellman update & Tabular update & SGD on parameters & Batch fitting & In-context inference \\
        Adaptation mechanism & Table entries & Gradient updates & Batch refit & Context construction \\
        Online vs.\ batch & Online & Online & Batch (offline) & Online \\
        Per-step error source & TD sample noise & Gradient noise $\sigma^2$ & Approximation error & $\varepsilon_{\mathrm{ICL}} + \varepsilon_{\mathrm{label}}$ \\
        Error accumulation & $\sum_t \alpha_t^2$ & $\sum_t \alpha_t^2$ (compounds) & Per-iteration & Bounded constants \\
        Sample complexity & $\tilde{\mathcal{O}}(\frac{|\mathcal{S}||\mathcal{A}|}{(1-\gamma)^4\epsilon^2})$ & $\tilde{\mathcal{O}}(\frac{|\mathcal{S}||\mathcal{A}|}{(1-\gamma)^4\epsilon^2})$ & Dataset-dependent & $\tilde{\mathcal{O}}(\frac{c \cdot m_{\min}}{(1-\gamma)^4\epsilon^2})$ \\
        Function approximation & \xmark & \cmark & \cmark & \cmark \\
        Uses pretrained model & \xmark & \xmark & \xmark & \cmark  \\
        Parameter updates during RL & Table only & SGD & Batch & None  \\
        Per-step compute & Very low & Low (forward + backward) & High (batch refit) & Higher (context inference) \\
        Memory footprint & $|\mathcal{S}||\mathcal{A}|$ table & Parameters & Full dataset & Parameters + context \\
        \bottomrule
      \end{tabular}
  \end{center}
  \vskip -0.1in
\end{table}

\subsection{Adaptive Switching Criterion}\label{app:adaptive_switch}

The switching point $T_0$ in Algorithm~\ref{alg:fql} is treated as a tunable hyperparameter in the main text. Here we describe a quality-gated switching rule that arms at a soft floor and fires only once recent DQN episodes demonstrate sufficient quality, mitigating the early-switching failure characterized in Section~\ref{sec:theory}.

After $t \geq T_0$, let $\mathcal{W}_t = \{R_{t-W+1}, \ldots, R_t\}$ denote the rolling window of the last $W$ episode returns ($W=30$ in our experiments). Define the quality threshold $\theta_t = \mathrm{Quantile}_q(\mathcal{W}_t)$ and the count of episodes above threshold $G_t = |\{R \in \mathcal{W}_t : R > \theta_t\}|$. The switch fires when
\begin{equation}\label{eq:switch_gate}
G_t \geq G_{\min} \quad\text{and}\quad \theta_t > \theta_{\mathrm{floor}} + \delta,
\end{equation}
where $G_{\min}=20$, $\theta_{\mathrm{floor}}$ is an environment-specific failure floor, and $\delta=1$ is a margin against degenerate quantiles. We use $q=0.5$ for dense-reward environments and $q=0.75$ for sparse-reward environments where the median return collapses to the failure floor early in training.

\textbf{Adaptive switch timing for sparse-reward environments. } When DQN may not find non-failure trajectories before $T_0$, the gate alone is insufficient: $\theta_t$ would equal the failure floor, and $R > \theta_t$ would admit failed episodes. We therefore allow the actual switching step to lag behind the configured $T_0$. While Eq.~\ref{eq:switch_gate} remains unmet, DQN continues to receive gradient updates. This lets the agent escape the failure floor before any context is committed. The handoff to the TFM occurs as soon as Eq.~\ref{eq:switch_gate} is first satisfied, capturing a non-degenerate context pool.

\subsection{Adaptive Refit Frequency}\label{app:adaptive_refit}

After the switch, the TFM must periodically refit its context to incorporate new transitions. A fixed refit cadence (every $N$ episodes) is brittle. The short-episode environments accumulate buffer data faster per episode than long-episode ones, so a value tuned for one environment over- or under-fits in another.

We instead refit when the context becomes stale relative to its size. Let $\Delta_t = t - t_{\mathrm{last}}$ be the number of environment steps since the last refit and $K$ the context size. The staleness fraction is $\rho_t = \Delta_t / K$. A refit triggers when
\begin{equation}\label{eq:refit_gate}
\rho_t \geq \rho_{\mathrm{stale}} \quad\text{and}\quad e_t - e_{\mathrm{last}} \geq e_{\min},
\end{equation}
with $\rho_{\mathrm{stale}} = 0.25$ and $e_{\min}=1$. Intuitively, refit once at least 25\% of the context has turned over since the last fit. This formulation is environment-agnostic: a fast environment collecting 4{,}000 transitions per episode and a slow one collecting 100 both refit at the same fractional turnover, which is the quantity that controls how much new information the context could absorb.

\subsection{In-Context Bellman Inference}\label{app:icl_bellman_full}

We provide the full conceptual development of TabQL's in-context Bellman inference, summarized in Section~\ref{sec:tabql} of the main body.

\textbf{Conditional inference perspective.} Recall the optimal action-value function $Q^*$ satisfies $Q^* = \mathcal{T}Q^*$, and classical tabular Q-learning approximates this fixed point via the stochastic update
\begin{equation*}
Q_{t+1}(s_t,a_t)=(1-\alpha_t)Q_t(s_t,a_t)+\alpha_t\left(r_t+\gamma\max_{a'}Q_t(s_{t+1},a')\right).
\end{equation*}
Each transition contributes a single local constraint, and global consistency emerges only after many updates and extensive environment interaction. TabQL adopts a different viewpoint: rather than iteratively reducing per-step Bellman residuals, we treat Q-value estimation as a conditional inference problem on a finite context of recent experience. The context $\mathcal{C}_t$ defines an empirical MDP, and the TFM is asked to infer the Q-value of any query pair $(s,a)$ given the value structure encoded in $\mathcal{C}_t$.

\textbf{TFM as a sequence model.} The TFM is a sequence model with $f_\phi(\mathcal{C}_t, s, a) = f_\phi(\mathcal{C}_t \oplus (s,a))$, where $\oplus$ denotes concatenation of the query with the context. Two key properties make this useful in TabQL: (i) \emph{tabularized input} — each transition in $\mathcal{C}_t$ is encoded as a discrete or discretized tuple $(s, a, r, s', \hat{Q}^{\mathrm{DQN}}(s,a))$, allowing the model to generalize tabular reasoning across state-action pairs; (ii) \emph{in-context adaptation} — conditioning on $\mathcal{C}_t$ enables zero- or few-shot adjustment to the empirical dynamics encountered online, without any gradient updates to $\phi$. Importantly, the TFM does not maintain an explicit Q-table; Q-values are computed on demand via in-context inference.

\textbf{Why pretraining on tabular regression suffices.} A natural concern is whether the TFM must be pretrained specifically on tabular RL distributions to approximate Bellman reasoning. Our results indicate this is not necessary. TabPFN and TabDPT are pretrained on millions of synthetic tabular regression tasks, encoding a strong inductive bias toward smooth, low-complexity tabular functions. When given a context of $(s, a, \hat{Q}^{\mathrm{DQN}}(s,a))$ triples drawn from a converging warm-up policy, the TFM performs amortized regression — effectively averaging over prior-compatible smooth functions consistent with the context. This produces Q-estimates with lower variance than the raw DQN labels, even when those labels are individually noisy, because the smoothness prior implicitly denoises across nearby state-action pairs.

\textbf{Mechanism summary.} The mechanism that makes TabQL work is therefore not Bellman-specific pretraining, but rather the TFM's general-purpose tabular regression capability applied to a context whose labels happen to be (noisy) Bellman targets. This shifts the burden of generalization from gradient-based parameter updates to one-shot inference — the central architectural advantage TabQL inherits from in-context learning.

\subsection{Tabular Foundation Model}\label{app:tfm_full}

The tabular foundation model (TFM) used in TabQL is a parameterized sequence model
\begin{equation*}
f_\phi: \mathcal{X}^K \times (\mathcal{S}\times\mathcal{A}) \to \mathbb{R},
\end{equation*}
that maps a context of $K$ tabularized transitions and a query state-action pair to a scalar Q-value prediction. The induced contextual Q-function is $\hat{Q}^{\mathrm{TFM}}(s,a|\mathcal{C}) := f_\phi(\mathcal{C}, s, a)$.

\textbf{Pretraining.} The parameters $\phi$ are pretrained on a distribution of synthetic tabular regression tasks (e.g., TabPFN~\cite{hollmann2022tabpfn}, TabDPT~\cite{ma2024tabdpt}). This pretraining is task-independent: it does not assume access to RL trajectories or Bellman targets, and it is not modified by TabQL at any point. The TFM is used purely as an off-the-shelf in-context regressor.

\textbf{Adaptation through context, not gradients.} During TabQL's online learning, adaptation occurs entirely through conditioning on $\mathcal{C}_t$. The TFM weights $\phi$ are frozen for the duration of every TabQL run; no fine-tuning is performed. Each new transition appended to the buffer immediately influences subsequent Q-estimates by virtue of being included in $\mathcal{C}_t$, without waiting for gradient propagation through any parameters.

\textbf{Greedy policy.} Given context $\mathcal{C}_t$ at time $t$, TabQL selects actions according to the greedy policy
\begin{equation*}
\pi_{\phi,\mathcal{C}_t}(s) \in \arg\max_{a\in\mathcal{A}} \hat{Q}^{\mathrm{TFM}}_t(s,a).
\end{equation*}
The full TabQL action-selection rule includes $\varepsilon$-greedy exploration on top of this greedy policy (see Algorithm~\ref{alg:fql}).

\textbf{TFM backbones used.} We evaluate TabQL with two TFM backbones: TabPFN (v0.1.9, tree-based with attention readout) and TabDPT (transformer-based). Both achieve similar performance on our benchmarks (Section~\ref{sec:experiments}), suggesting the result is robust to the specific TFM choice.

\subsection{Asymptotic Behavior and Performance Loss Derivation}\label{asymptotic_derivation}

We provide the full derivation of the asymptotic bound and performance loss referenced in Section~\ref{sec:theory}. As the buffer grows and Assumption~\ref{assumption_3} holds with $m_t \to m_{\min}$ uniformly, the statistical error converges to a finite constant
\begin{equation*}
\bar{\varepsilon}_{\mathrm{stat}} \;\leq\; \varepsilon_{\mathrm{label}} + \frac{\gamma}{1-\gamma}\sqrt{\frac{2\log(|\mathcal{S}||\mathcal{A}|/\delta)}{m_{\min}}}.
\end{equation*}
By the dominated convergence theorem, since $\sum_{\tau=0}^\infty \gamma^\tau = (1-\gamma)^{-1} < \infty$, the discounted sum in Theorem~\ref{theorem_1} converges to a bounded limit:
\begin{equation*}
\limsup_{t\to\infty}\|\hat{Q}^{\mathrm{TFM}}_t - Q^*\|_\infty \;\leq\; \frac{\varepsilon_{\mathrm{ICL}} + \varepsilon_{\mathrm{label}}}{1-\gamma} + \frac{\gamma}{(1-\gamma)^2}\sqrt{\frac{2\log(|\mathcal{S}||\mathcal{A}|/\delta)}{m_{\min}}}.
\end{equation*}
The performance-difference inequality~\cite{antos2008learning,munos2008finite} gives $V^*(s) - V^{\pi_t}(s) \leq \frac{2\gamma}{1-\gamma}\|\hat{Q}^{\mathrm{TFM}}_t - Q^*\|_\infty$. Combining with the asymptotic bound yields the asymptotic suboptimality $V^*(s)-V^{\pi_\infty}(s)$ when $t\to\infty$.

\subsection{Full Conditional Comparison with DQN}\label{dqn_comparison_full}

We expand the discussion in Section~\ref{sec:theory} of how TabQL's error structure differs from DQN's. The standard DQN sample complexity is $N_{\mathrm{DQN}}=\tilde{\mathcal{O}}(|\mathcal{S}||\mathcal{A}|/((1-\gamma)^4\epsilon^2))$~\cite{li2024q}, arising from the stochastic gradient recursion
\begin{equation*}
\mathbb{E}\|\hat{Q}^{\mathrm{DQN}}_{t+1}-Q^*\|^2 \leq (1-\alpha_t(1-\gamma))\mathbb{E}\|\hat{Q}^{\mathrm{DQN}}_t-Q^*\|^2+\alpha_t^2\sigma^2,
\end{equation*}
where each update injects variance $\sigma^2 = \Omega(1/(1-\gamma)^2)$ from the single-sample TD target. The resulting $\sum_t \alpha_t^2$ accumulation is intrinsic to stochastic gradient methods and produces a fundamental $\epsilon^{-2}$ dependence that cannot be reduced without additional structural assumptions.

In TabQL, when Assumptions~\ref{assumption_1}--\ref{assumption_3} hold, Bellman updates are computed via in-context inference rather than gradient steps. Per-step variance is replaced by the fixed approximation error $\varepsilon_{\mathrm{ICL}}$ and the inherited label error $\varepsilon_{\mathrm{label}}$, neither of which compounds across iterations. The discounted sum in Theorem~\ref{theorem_1} takes the form $\sum_{\tau=0}^{t-1}\gamma^\tau \varepsilon_{\mathrm{stat}}(\tau)$ with $\varepsilon_{\mathrm{stat}}(\tau) = \mathcal{O}(1/\sqrt{m_\tau}) + \varepsilon_{\mathrm{label}}$, in contrast to DQN's $\sum_t \alpha_t^2$ variance accumulation. TabQL therefore trades the hyperparameter cost of choosing $T_0$ and $K$ for a structural advantage in error propagation. When Assumption~\ref{assumption_3} is violated or $\varepsilon_{\mathrm{label}}$ is large, this advantage vanishes and TabQL degrades gracefully toward the warm-up DQN's behavior.

\subsection{Mechanistic Analysis of Early-Switching Failure}\label{early_switching_full}

The corrected error bound from Theorem~\ref{theorem_1} formally explains the empirical early-switching failure:
\begin{equation*}
\|\hat{Q}^{\mathrm{TFM}}_t-Q^*\|_\infty\leq \gamma^t\|\hat{Q}^{\mathrm{TFM}}_0-Q^*\|_\infty+\sum_{\tau=0}^{t-1}\gamma^\tau\big(\varepsilon_{\mathrm{ICL}}+\varepsilon_{\mathrm{stat}}(\tau)\big),
\end{equation*}
where $\varepsilon_{\mathrm{stat}}(\tau)$ contains the label error term $\varepsilon_{\mathrm{label}} = \|\hat{Q}^{\mathrm{DQN}} - Q^*\|_\infty$. Both the initial approximation error and $\varepsilon_{\mathrm{label}}$ are determined by the warm-up DQN at the moment of switching.

Crucially, this bias cannot be corrected by additional context or longer ICL inference. TabQL applies Bellman contraction to an approximate operator realized via in-context inference; if the labels supplied to that operator are systematically biased, the iteration converges toward a biased fixed point corresponding to a locally optimal policy rather than $Q^*$. Even when the warm-up DQN continues to collect transitions in parallel, TabQL primarily conditions on recent on-policy context, causing the context distribution to concentrate in a suboptimal region of the state-action space. While the statistical estimation error in $\hat{P}_{\mathcal{C}_t}$ diminishes as the context grows, the bias term $\varepsilon_{\mathrm{label}}$ persists. This behavior mirrors classical results in approximate dynamic programming, where biased function approximation leads Bellman iteration to converge only to a neighborhood of a suboptimal solution rather than $Q^*$.

\subsection{Discussion on Dependence on DQN Labels After Switching}\label{dependence}
While TabQL reduces reliance on repeated gradient-based Bellman optimization, the current instantiation uses DQN as a label generator after the warm-up phase. This does not contradict the amortization objective: DQN serves as a transient teacher that provides approximate Bellman targets, whereas TabQL performs the primary value inference and policy improvement through in-context estimation. The roles are therefore asymmetric as DQN supplies supervision signals, while TabQL replaces iterative optimization with amortized inference. Our theoretical analysis explicitly allows for imperfect teacher labels, showing that TabQL’s value error scales additively with label noise and is subsequently contracted by the Bellman operator. This structure is analogous to amortized optimization and search-distillation frameworks, where expensive or imperfect solvers are used to train fast inference models. Importantly, empirical results indicate that once TabQL reaches sufficient contextual coverage, its performance becomes significantly less sensitive to teacher accuracy, suggesting that the amortized estimator progressively dominates the learning dynamics.

\subsection{Proofs of Main Results}\label{missing_proof}

We first prove Theorem~\ref{theorem_1}.
\begin{proof}
Starting from the error decomposition in Eq.~\eqref{eq_15}:
\begin{equation*}
\hat{Q}^{\mathrm{TFM}}_{t+1} - Q^* = (\mathcal{T}\hat{Q}^{\mathrm{TFM}}_t - \mathcal{T}Q^*) 
+ (\hat{\mathcal{T}}_{\mathcal{C}_t}\hat{Q}^{\mathrm{TFM}}_t - \mathcal{T}\hat{Q}^{\mathrm{TFM}}_t)
+ (\hat{Q}^{\mathrm{TFM}}_{t+1} - \hat{\mathcal{T}}_{\mathcal{C}_t}\hat{Q}^{\mathrm{TFM}}_t).
\end{equation*}
Taking the sup-norm and applying the triangle inequality:
\begin{equation*}
\|\hat{Q}^{\mathrm{TFM}}_{t+1} - Q^*\|_\infty \leq 
\|\mathcal{T}\hat{Q}^{\mathrm{TFM}}_t - \mathcal{T}Q^*\|_\infty 
+ \|\hat{\mathcal{T}}_{\mathcal{C}_t}\hat{Q}^{\mathrm{TFM}}_t - \mathcal{T}\hat{Q}^{\mathrm{TFM}}_t\|_\infty
+ \|\hat{Q}^{\mathrm{TFM}}_{t+1} - \hat{\mathcal{T}}_{\mathcal{C}_t}\hat{Q}^{\mathrm{TFM}}_t\|_\infty.
\end{equation*}
By the standard $\gamma$-contraction property of the Bellman operator $\mathcal{T}$ under sup-norm, the first term is bounded by $\gamma\|\hat{Q}^{\mathrm{TFM}}_t - Q^*\|_\infty$. The second term is $\varepsilon_{\mathrm{stat}}(t)$ by definition. The third term is bounded by $\varepsilon_{\mathrm{ICL}}$ by Assumption~\ref{assumption_1}. Combining,
\begin{equation*}
\|\hat{Q}^{\mathrm{TFM}}_{t+1} - Q^*\|_\infty \leq \gamma\|\hat{Q}^{\mathrm{TFM}}_t - Q^*\|_\infty + \varepsilon_{\mathrm{ICL}} + \varepsilon_{\mathrm{stat}}(t).
\end{equation*}
Unrolling this recursion from $t=0$ yields:
\begin{equation*}
\|\hat{Q}^{\mathrm{TFM}}_t - Q^*\|_\infty \leq \gamma^t\|\hat{Q}^{\mathrm{TFM}}_0 - Q^*\|_\infty + \sum_{\tau=0}^{t-1}\gamma^{t-1-\tau}\big(\varepsilon_{\mathrm{ICL}} + \varepsilon_{\mathrm{stat}}(\tau)\big).
\end{equation*}
Since the geometric weights $\gamma^{t-1-\tau}$ form the same partial sum as $\gamma^\tau$ when $\varepsilon_{\mathrm{stat}}(\tau)$ is bounded uniformly, we obtain the form stated in Theorem~\ref{theorem_1}. The bound on $\varepsilon_{\mathrm{stat}}(\tau)$ follows from Hoeffding's inequality applied to the empirical kernel under Assumption~\ref{assumption_3}, combined with the label error decomposition (derived in the proof of Theorem~\ref{theorem_2} below).
\end{proof}

We now prove Theorem~\ref{theorem_2}.
\begin{proof}
Since rewards are bounded in $[0,1]$, $\|\hat{Q}^{\mathrm{TFM}}_t\|_\infty \leq 1/(1-\gamma)$. Using Hoeffding's inequality for the empirical kernel and a union bound over $|\mathcal{S}||\mathcal{A}|$ pairs, the statistical error decomposes as
\begin{equation*}
\varepsilon_{\mathrm{stat}}(t) \leq \frac{\gamma}{1-\gamma}\sqrt{\frac{2\log(|\mathcal{S}||\mathcal{A}|/\delta)}{m_t}} + \varepsilon_{\mathrm{label}},
\end{equation*}
where $m_t \geq m_{\min}$ is the per-query revisitation count guaranteed by Assumption~\ref{assumption_3}. The label term $\varepsilon_{\mathrm{label}}$ is fixed at the end of the warm-up phase and does not depend on $t$. Recall the recursion from Theorem~\ref{theorem_1}:
\begin{equation*}
\|\hat{Q}^{\mathrm{TFM}}_t-Q^*\|_\infty \leq \gamma^t\|\hat{Q}^{\mathrm{TFM}}_0-Q^*\|_\infty + \sum_{\tau=0}^{t-1}\gamma^\tau\big(\varepsilon_{\mathrm{ICL}}+\varepsilon_{\mathrm{stat}}(\tau)\big).
\end{equation*}
We bound each contribution to keep the total below $\epsilon$.

\textit{Initial error.} Since $\|\hat{Q}^{\mathrm{TFM}}_0-Q^*\|_\infty \leq 1/(1-\gamma)$, requiring $\gamma^t/(1-\gamma) \leq \epsilon/3$ yields
\begin{equation*}
t \geq \frac{\log(3/((1-\gamma)\epsilon))}{\log(1/\gamma)} = \mathcal{O}\!\left(\frac{1}{1-\gamma}\log\frac{1}{\epsilon}\right).
\end{equation*}

\textit{ICL error.} The geometric sum gives $\sum_{\tau=0}^{t-1}\gamma^\tau \varepsilon_{\mathrm{ICL}} \leq \varepsilon_{\mathrm{ICL}}/(1-\gamma)$. Requiring this to be at most $\epsilon/3$ gives the hypothesis $\varepsilon_{\mathrm{ICL}} \leq (1-\gamma)\epsilon/3$, which is a model-capacity condition rather than an interaction requirement.

\textit{Label error.} Similarly, $\sum_{\tau=0}^{t-1}\gamma^\tau \varepsilon_{\mathrm{label}} \leq \varepsilon_{\mathrm{label}}/(1-\gamma)$. The hypothesis $\varepsilon_{\mathrm{label}} \leq (1-\gamma)\epsilon/3$ controls this contribution. This is a warm-up quality condition: longer warm-up reduces $\varepsilon_{\mathrm{label}}$.

\textit{Statistical error from finite context.} For the remaining contribution, fixing $m_\tau = m \geq m_{\min}$,
\begin{equation*}
\sum_{\tau=0}^{t-1}\gamma^\tau \cdot \frac{\gamma}{1-\gamma}\sqrt{\frac{2\log(|\mathcal{S}||\mathcal{A}|/\delta)}{m}} \leq \frac{\gamma}{(1-\gamma)^2}\sqrt{\frac{2\log(|\mathcal{S}||\mathcal{A}|/\delta)}{m}}.
\end{equation*}
To make this at most $\epsilon/3$, we require
\begin{equation*}
m \geq \frac{18\gamma^2}{(1-\gamma)^4\epsilon^2}\log\!\left(\frac{|\mathcal{S}||\mathcal{A}|}{\delta}\right).
\end{equation*}

Since each environment interaction adds one transition to the buffer, achieving per-query revisitation $m$ requires (at least) $N \geq c \cdot m$ total interactions, where the constant $c$ depends on the visitation distribution. Uniform exploration gives $c = \Theta(|\mathcal{S}||\mathcal{A}|)$, whereas more concentrated exploration that prioritizes the relevant region of the state-action space gives a substantially smaller $c$. The interaction cost from this term is therefore
\begin{equation*}
N_{\mathrm{stat}} = \tilde{\mathcal{O}}\!\left(\frac{c}{(1-\gamma)^4\epsilon^2}\log\frac{|\mathcal{S}||\mathcal{A}|}{\delta}\right).
\end{equation*}

Combining the contraction horizon and the statistical cost gives the total
\begin{equation*}
N = \tilde{\mathcal{O}}\!\left(\frac{1}{(1-\gamma)^4\epsilon^2}\log\frac{|\mathcal{S}||\mathcal{A}|}{\delta} + \frac{1}{1-\gamma}\log\frac{1}{\epsilon}\right),
\end{equation*}
environment interactions, conditional on the hypotheses $\varepsilon_{\mathrm{ICL}} \leq (1-\gamma)\epsilon/3$ and $\varepsilon_{\mathrm{label}} \leq (1-\gamma)\epsilon/3$.
\end{proof}

\subsection{Feature Selection}\label{feature_selection}
We evaluated several feature representations for the tabular in-context learning model. Our results indicate that representing each transition using the (state, action) pair is both sufficient and effective. In contrast, directly using the raw discrete state index provided by the environment did not lead to meaningful performance improvements. Instead, we decode the state index back into its original environment-specific representation using the environment’s native decoding function, which yields more informative and structured inputs for the tabular model.

Beyond state and action features, we found that incorporating the episode timestep significantly improves performance. The timestep provides essential temporal context, particularly in our grid-world finite-horizon environments, where optimal actions depend on minimizing the number of steps taken. Accordingly, we include the episode timestep as part of each transition stored in the replay buffer. 
For cross-seed evaluation, we include the initial state as an additional feature, allowing the model to distinguish trajectories from different seed initializations. We observe substantial performance improvement after including this feature.

\subsection{Additional Results}
\begin{figure}[h]
    \centering
    \includegraphics[width=0.5\linewidth]{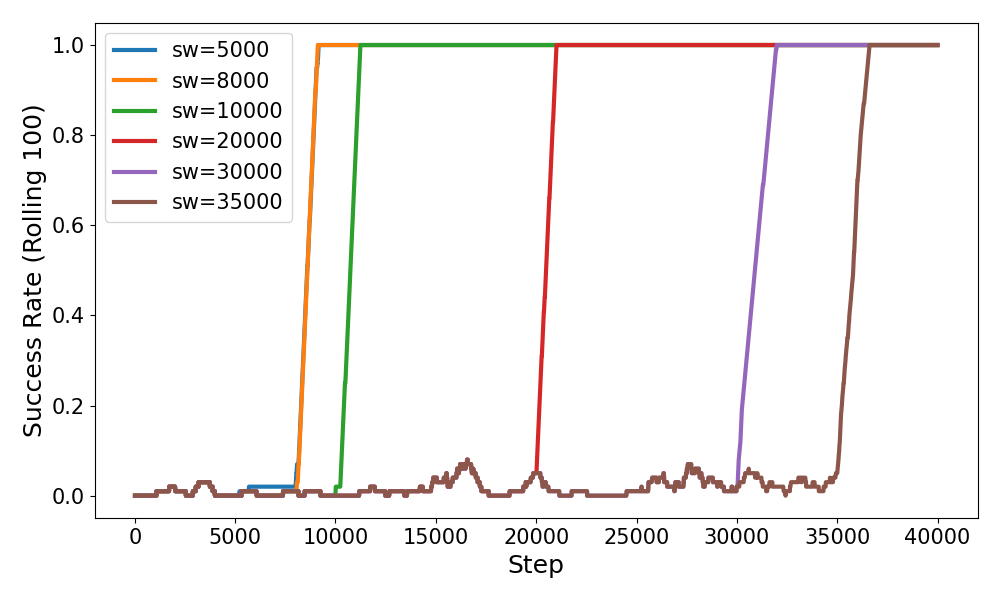}
    \caption{Switching point analysis for Frozen Lake}
    \label{fig:frozenlake_switching_plot}
\end{figure}

\begin{figure}[h]
    \centering
    \includegraphics[width=0.5\linewidth]{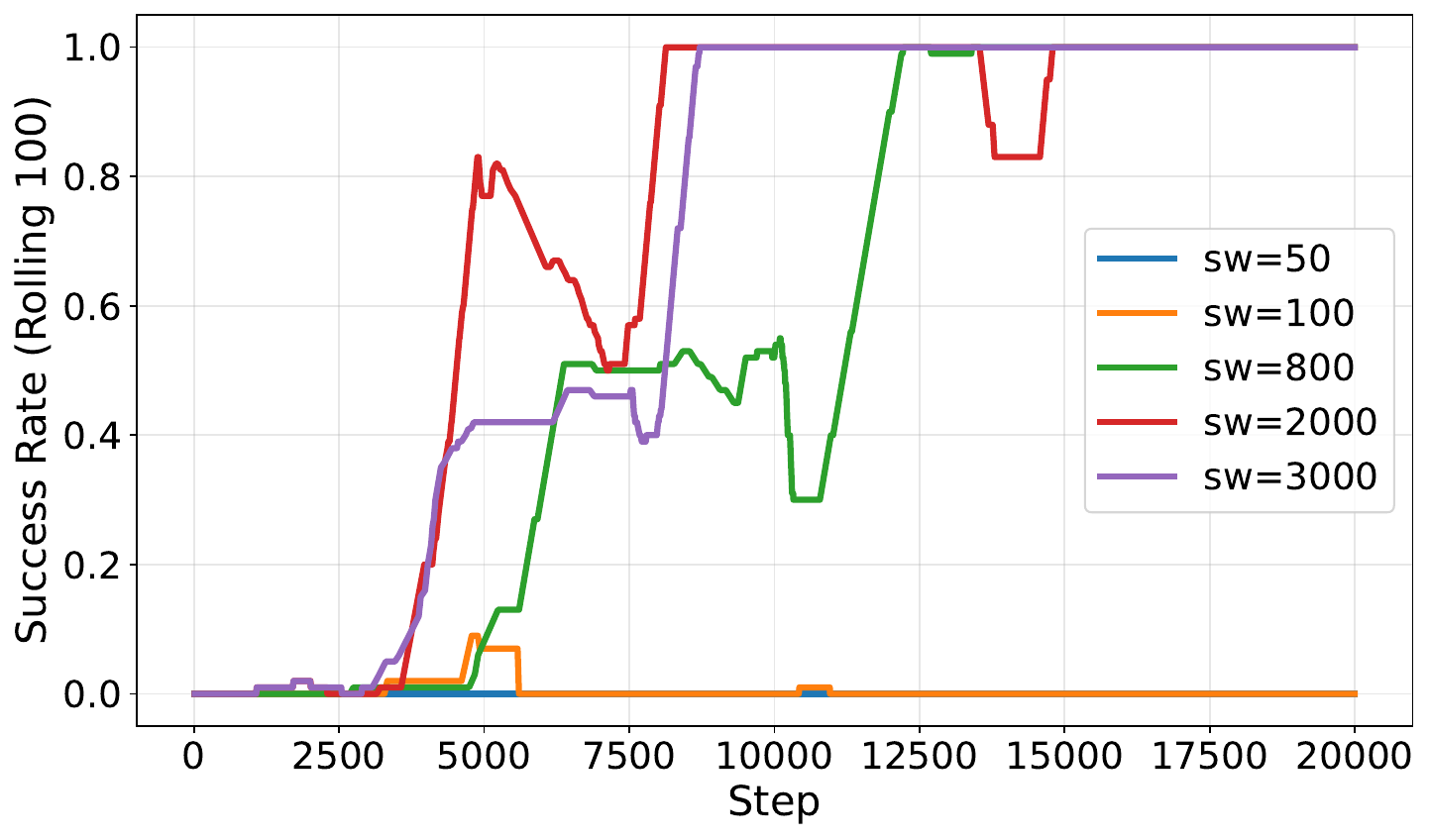}
    \caption{Switching point analysis under extremely early switching in FrozenLake.}
    \label{fig:frozenlake_switching_plot_low}
\end{figure}

\begin{figure}[h]
    \centering
    \includegraphics[width=0.5\linewidth]{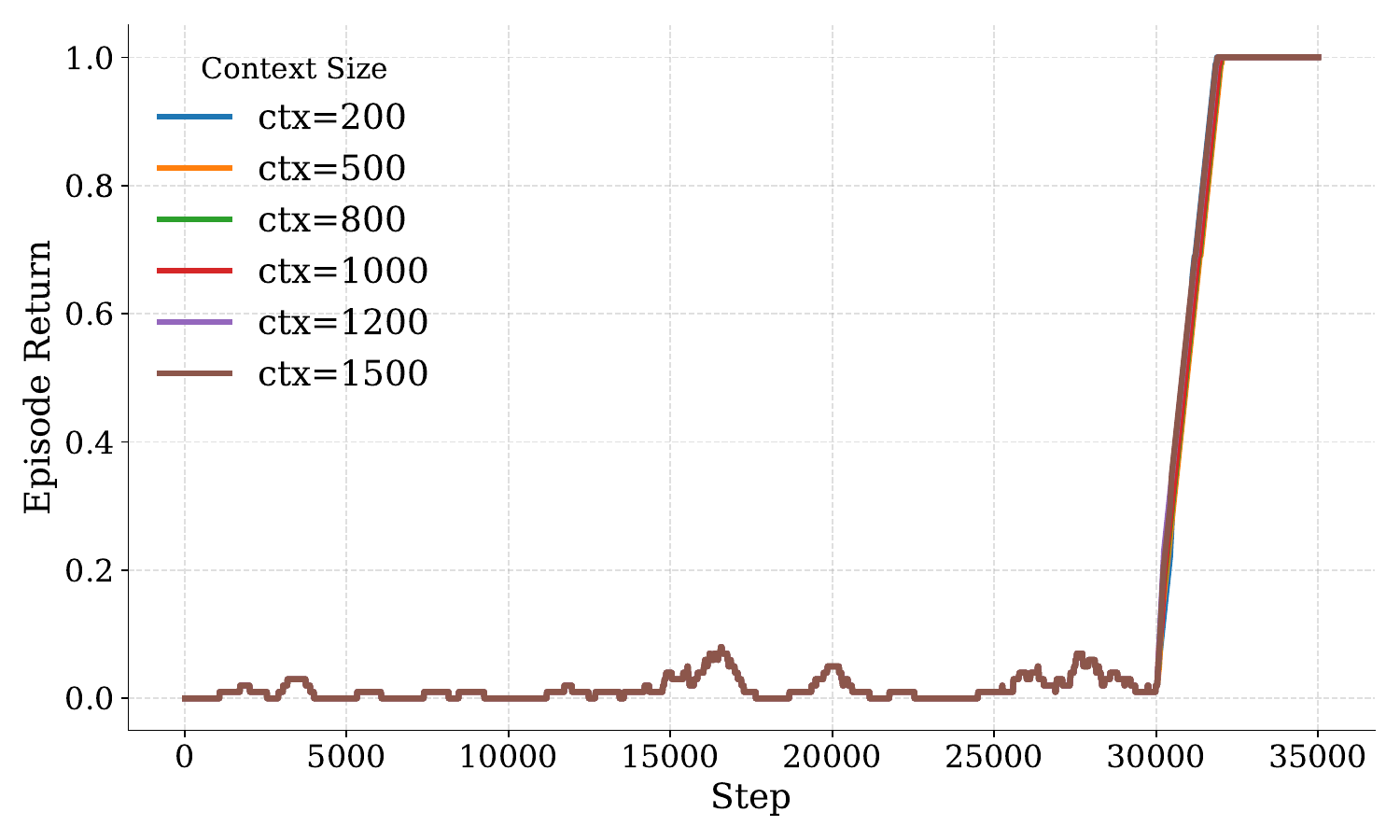}
    \caption{Impact of context size in Frozen Lake.}
    \label{fig:context_length_frozenlake}
    \vspace{-0.1in}
\end{figure}

\paragraph{Replay Buffer and Context Construction.}
In TabQL, the replay buffer stores state, action, and $Q$ value information rather than raw transition tuples. Specifically, for each visited state, we record the estimated $Q$-values for all available actions. During context construction, we sample a fixed number of the most recent states from the replay buffer according to the specified sampling strategy (see Table~\ref{tab:tabql_hyperparams}). For each sampled state, its corresponding set of action-specific $Q$-values is included in the in-context dataset.
Different environments may be associated with distinct action sets and $Q$-value distributions. TabQL treats each state-action pair as an independent query instance during in-context learning, which helps the tabular model to rapidly adapt its value estimation behavior.

\paragraph{Sampling Strategy.}
We also evaluated several sampling strategies for constructing the TabQL context, such as uniform random sampling, distance-based sampling that prioritizes states close to the current prediction state, and uniform sampling across state distances. After various testing experiments, sampling the most recent states consistently yielded the best overall performance across environments. We hypothesize that recent-state sampling better captures the current policy's latest value structure, providing more relevant in-context information for value estimation.

\begin{table}[t]
\centering
\scriptsize
\caption{TabQL hyperparameters for gridworld environments. Unless otherwise specified, all remaining hyperparameters follow standard DQN settings.}
\label{tab:tabql_hyperparams}
\setlength{\tabcolsep}{4pt}
\begin{tabular}{lccccccc}
\toprule
\textbf{Hyperparameter} & \textbf{Cliff} & \textbf{Frozen} & \textbf{Taxi} & \textbf{CartPole} & \textbf{Acrobot} & \textbf{LL} & \textbf{MC} \\
\midrule
Switch Step ($T_0$)              & 20k   & 30k   & 25k   & 20k    & 40k    & 120k   & 80k \\
Context Size ($K$)               & 1k    & 1.5k  & 1k    & 1k     & 2.7k   & 2k     & 2.7k \\
Refit Trigger                    & 1 ep  & 1 ep  & 1 ep  & adapt. & adapt. & adapt. & adapt. \\
Pre-switch Quantile ($q$)        & --    & --    & --    & 0.5    & 0.75   & 0.75   & 0.75 \\
Filter Tolerance ($\tau$)        & --    & --    & --    & 0.1    & 0.1    & 0.1    & 0.1 \\
Failure Floor                    & --    & --    & --    & --     & --     & --     & $-200$ \\
Sampling Strategy                & rec.  & rec.  & rec.  & rec.   & rec.   & rec.   & rec. \\
\bottomrule
\end{tabular}
\vspace{-0.35cm}
\end{table}
    
\paragraph{Effect of Switching Point on FrozenLake.}
Figure~\ref{fig:frozenlake_switching_plot} presents the effect of different switching points on the FrozenLake environment. Across a wide range of switching points (from 5k to 35k), TabQL shows consistently strong performance, which indicates that the precise choice of switching point has a limited impact once sufficient experience has been collected. However, when the switching point is set to an extremely small value (e.g., 50 or 100 steps), as shown in Figure~\ref{fig:frozenlake_switching_plot_low}, TabQL struggles to recover and fails to achieve reliable success.  This behavior suggests that switching to ICL too early can lead to insufficient or misleading context. These observations further support the claim in the main results section that a minimal amount of DQN training is necessary before activating TabQL.

\paragraph{Effect of Context Size on FrozenLake.}
We also investigate TabQL's sensitivity to context size in the FrozenLake environment, while fixing the switching point at 30k steps. As shown in Figure~\ref{fig:context_length_frozenlake}, we evaluate multiple context lengths ranging from 200 to 1500. Across this range, TabQL shows consistent performance with only minor differences between context sizes.

\subsection{Anonymous Code Release}

To support reproducibility, we provide an anonymous GitHub repository containing the TabQL implementation. The repository will be made publicly available upon acceptance.   Our code is available \href{https://anonymous.4open.science/r/TabQL-C045/}{here}.

\subsection{Training Configuration}
All experiments were conducted on a workstation equipped with an Intel(R) Core(TM) i7-14700 CPU and an NVIDIA RTX 4090 GPU.


\subsection{Tabular Foundation Models for Q-Value Estimation}
In this work, we integrate two TFMs, TabPFN~\cite{hollmann2022tabpfn} and TabDPT~\cite{ma2024tabdpt}, into our framework. Both models are well-suited for learning from tabular and sequential data. Those models have demonstrated strong performance in regression settings. We use this inspiration for the classical Q-table formulation. We then adopt these TFMs to model the state–action value function by directly predicting Q-values from tabular representations.

\newpage

\end{document}